\providecommand{\SetAlgoLined}{\SetLine}
\def\eg{\emph{e.g., }}
\def\ie{\emph{i.e., }}
\def\vs{\emph{vs. }}
\def\etal{\emph{et al. }}
\newcommand*{\rom}[1]{\expandafter\@slowromancap\romannumeral #1@}
\newcommand\footnoteref[1]{\protected@xdef\@thefnmark{\ref{#1}}\@footnotemark}
\newcommand{\bfsection}[1]{\vspace*{0.1cm}\noindent\textbf{#1.}}
\newtheorem{lemma}{Lemma}
\newtheorem{defi}{Definition}
\crefname{section}{Sec.}{Secs.}
\Crefname{section}{Section}{Sections}
\Crefname{table}{Table}{Tables}
\crefname{table}{Tab.}{Tabs.}
\begin{document}

\title{PRISE: Demystifying Deep Lucas-Kanade with Strongly Star-Convex Constraints for Multimodel Image Alignment}

\author{Yiqing Zhang \hspace{1cm} Xinming Huang \hspace{1cm} Ziming Zhang \\
Worcester Polytechnic Institute\\
100 Institute Rd, Worcester, MA, USA\\
{\tt\small \{yzhang37, xhuang, zzhang15\}@wpi.edu}
}

\maketitle

\begin{abstract}
    The Lucas-Kanade (LK) method is a classic iterative homography estimation algorithm for image alignment, but often suffers from poor local optimality especially when image pairs have large distortions. To address this challenge, in this paper we propose a novel {\em Dee\underline{p} Sta\underline{r}-Convexif\underline{i}ed Luca\underline{s}-Kanad\underline{e} (PRISE)} method for multimodel image alignment by introducing strongly star-convex constraints into the optimization problem. Our basic idea is to enforce the neural network to approximately learn a star-convex loss landscape around the ground truth give any data to facilitate the convergence of the LK method to the ground truth through the high dimensional space defined by the network. This leads to a minimax learning problem, with contrastive (hinge) losses due to the definition of strong star-convexity that are appended to the original loss for training. We also provide an efficient sampling based algorithm to leverage the training cost, as well as some analysis on the quality of the solutions from PRISE. We further evaluate our approach on benchmark datasets such as MSCOCO, GoogleEarth, and GoogleMap, and demonstrate state-of-the-art results, especially for small pixel errors. Code can be downloaded from \url{https://github.com/Zhang-VISLab}.
\end{abstract}

\section{Introduction}
\label{sec:intro}
Deep learning networks have achieved great success in homography estimation by directly predicting the transformation matrix in various scenarios. However, the existing classic algorithms still take the place for showing more explainability compared with the deep learning architectures. Such algorithms are often rooted from well-studied theoretical and empirical grounding. Current works often focus on combining the robustness of deep learning with explainability of classical algorithms to handle multimodel image alignment such as image modality and satellite modality. However, due to the high nonconvexity in homography estimation, such methods often suffer from poor local optimality.

Recently Zhao \etal \cite{zhao2021deep} proposed DeepLK for multimodel image alignment, \ie estimating the homography between two planar projections of the same view but across different modalities such as map and satellite images (see Sec. \ref{sssec:homography} for formal definition), based on the LK method \cite{lucasiterative}. This method consists of two novel components:
\begin{itemize}[nosep, leftmargin=*]
    \item A new deep neural network was proposed to map images from different modalities into the same feature space where the LK method can align them.

    \item A new training algorithm was proposed as well by enforcing the local change on the loss landscape should be no less than a quadratic shape centered at the ground truth for any image pair, with no specific reason.
\end{itemize}
Surprisingly, when we evaluate DeepLK based on the public code\footnote{\url{https://github.com/placeforyiming/CVPR21-Deep-Lucas-Kanade-Homography}}, the proposed network cannot work well without the proposed training algorithm. This strongly motivate us to discover {\em the mysteries in the DeepLK training algorithm.}

\bfsection{Deep Reparametrization}
Our first insight from DeepLK is that the deep neural network essentially maps the alignment problem into a much higher dimensional space by introducing a large amount of parameters. The high dimensional space provides the feasibility to reshape the loss landscape of the LK method. Such deep reparametrization has been used as a means of reformulating some problems such as shape analysis \cite{celledoni2022deep}, super-resolution and denoising \cite{bhat2021deep}, while preserving the properties and constraints in the original problems. This insight at test time can be interpreted as
\begin{align}\label{eqn:downstream_task}
    \min_{\omega\in\Omega} \ell(\omega; x) \xRightarrow[\text{via deep learning}]{\text{reparametrization}} \min_{\omega\in\Omega} \ell_f(\omega; x, \theta^*),
\end{align}
where $x\in\mathcal{X}$ denotes the input data, $\ell$ denotes a nonconvex differentiable function (\eg the LK loss) parametrized by $\omega\in\Omega$, $f:\mathcal{X}\times\Theta\rightarrow\mathcal{X}$ denotes an auxiliary function presented by a neural network with learned weights $\theta^*\in\Theta$ (\eg the proposed network in DeepLK), and $\ell_f$ denotes the loss with deep reparametrization (\eg the DeepLK loss). In this way, the learning problem is how to train the network so that the optimal solutions can be located using gradient descent (GD) given data.

\bfsection{Convex-like Loss Landscapes}
Our second insight from DeepLK is that the learned loss landscape from their training algorithm tends to be convex-like (see their experimental results). This is an interesting observation, as it is evidenced in  \cite{li2018visualizing} that empirical more convex-like loss landscapes often return better performance. However, we cannot find any explicit explanation through the paper about the reason, which raises the following questions that we aim to address:
\begin{itemize}[nosep, leftmargin=*]
    \item Does the convex-like shape hold for any image pair?

    \item If so, why? Is there any guarantee on solutions?
\end{itemize}

		
		

\bfsection{Our Approach: Dee\underline{p} Sta\underline{r}-Convexif\underline{i}ed Luca\underline{s}-Kanad\underline{e} (PRISE)}
To mitigate the issue of poor local optimality in homography estimation, in this paper we propose a novel approach, namely PRISE, to enforce deep neural networks to approximately learn star-convex loss landscapes for the downstream tasks. Recently star-convexity \cite{nesterov2006cubic} in nonconvex optimization has been attracting more and more attention \cite{lee2016optimizing, pmlr-v125-hinder20a, gower2021sgd, kuruzov2021sequential} because of its capability of finding near-optimal solutions based on GD with theoretical guarantees. Star-convex functions refer to a particular class of (typically) non-convex functions whose global optimum is visible from every point in a downhill direction. From this view, convexity is a special case of star-convexity. In the literature, however, most of the works focus on optimizing and analyzing star-convex functions, while learning such functions is hardly explored.
In contrast, our PRISE imposes additional hinge losses, derived from the definition of star-convexity, on the learning objective during training. At test time, the nice convergence properties of star-convexity help find provably near-optimal solutions for the tasks using the LK method. We further show that DeepLK is a simplified and approximate algorithm of PRISE, and thus shares some properties with ours, but with worse performance.

Recently \cite{zhou2018sgd} have shown that stochastic gradient descent (SGD) will converge to global minimum in deep learning if the assumption of star-convexity in the loss landscapes hold. They validated this assumption (in a major part of training processes) empirically using relatively shallow networks and small-scale datasets by showing the classification training losses can converge to zeros. Nevertheless, we argue that this assumption may be too strong to hold in complex networks for challenging tasks, if without any additional imposition on learning. In our experiments we show that even we attempt to learn star-convex loss landscapes, the outputs at both training and test time are hardly perfect for complicated tasks.

\bfsection{Contributions} Our key contributions are listed as follows:
\begin{itemize}[nosep, leftmargin=*]
    \item We propose a novel PRISE method for multimodel image alignment by introducing (strongly) star-convex constraints into the network training, which is rarely explored in the literature of deep learning.
    \item We provide some analysis on the quality of the solutions from PRISE through star-convex loss landscapes.
    \item We demonstrate the state-of-the-art results on some benchmark datasets for multimodel image alignment with much better accuracy, especially when the pixel errors are small.
\end{itemize}

\section{Related Work}
\bfsection{Homography Estimation}
Homography estimation is a classic task in computer vision. The feature-based methods \cite{ye2017robust, fu2019adaptive, ye2019fast} have existed for several decades but required similar contextual information to align the source and target images. To overcome this problem, researchers use deep neural networks \cite{erlik2017homography, nguyen2018unsupervised, le2020deep, zhang2020content} to increase the alignment robustness between the source and template images.
For instance, DHM \cite{detone2016deep} produces a distribution over quantized homographies to directly estimates the real-valued homography parameters. MHN \cite{le2020deep} utilizes a multi-scale neural network to handle dynamic scenes. Since then, finding a combinatorial method from classical and deep learning approaches has become possible. Recent models such as CLKN \cite{chang2017clkn}, DeepLK \cite{zhao2021deep} focus on learning a feature map for traditional Inverse Compositional Lucas-Kanade method on multimodal image pairs. Also, IHN \cite{Cao_2022_CVPR} provides a correlation finding mechanism and iterative homography estimators across different scale to improve the performance of homography estimation without any untrainable part. A good survey can be found in \cite{agarwal2005survey}. 

\bfsection{Nonconvexity and Convexification} 
Nonconvexity is challenging in statistical learning where researchers proposed several regularized estimators \cite{tuan2000low, loh2013regularized, loh2015regularized} that can solve this issue partially. For deep learning or network training, such nonconvexity also brings serious trouble in optimization such as Adam \cite{kingma2014adam}. Recently, the concept of convexification has started to be introduced into the training process \cite{yang2020graduated}. Several works \cite{wang2020adaptively, mao2016successive, reddi2018adaptive,vettam2019regularized} have demonstrated that the convex properties can be utilized in training a deep neural network whose loss landscape shows nonconvexity. 

\bfsection{Adversarial Training}
Adversarial training is one of the most effective strategies for improving robustness with adversarial data generation and model training. 
For the former, generative adversarial network (GAN) \cite{goodfellow2020generative} and its variants \cite{creswell2018generative} are classic deep neural networks that can be used to generate adversarial examples. For the latter, fast gradient sign method (FSGM) \cite{goodfellow2014explaining} and its variants \cite{dong2018boosting, lin2019nesterov, andriushchenko2020understanding} are widely used to train deep models. For instance, Shafahi \etal \cite{shafahi2019adversarial} proposed an algorithm that eliminates the overhead cost of generating adversarial examples by recycling the gradient information computed when updating model parameters. Wong \etal \cite{wong2020fast} demonstrated that it is possible to train empirically robust models using a much weaker and cheaper adversary. Good surveys can be found in \cite{bai2021recent, qian2022survey}.

\bfsection{Contrastive Learning}
Recently, learning representations from unlabeled data in contrastive way \cite{chopra2005learning, hadsell2006dimensionality} has been one of the most competitive research field \cite{oord2018representation, hjelm2018learning, wu2018unsupervised, tian2020contrastive, sohn2016improved, chen2020simple, jaiswal2020survey, li2020prototypical, he2020momentum, chen2020improved, chen2020big, bachman2019learning, misra2020self, caron2020unsupervised}. Popular methods such as SimCLR \cite{chen2020simple} and Moco \cite{he2020momentum} apply the commonly used loss function InfoNCE \cite{oord2018representation} to learn latent representation that is beneficial to downstream tasks. Several theoretical studies show that contrastive loss optimizes data representations by aligning the same image's two views (positive pairs) while pushing different images (negative pairs) away on the hypersphere \cite{wang2020understanding, chen2021intriguing, wang2021understanding, arora2019theoretical}. In terms of applications there are a large amount of works in images \cite{zimmermann2021contrastive, tian2020makes, li2020prototypical, he2020momentum} and 3D point clouds \cite{tang2022contrastive, yang2021unsupervised, jiang2021guided, du2021self, wang2022improving, yan2022implicit, eckart2021self, afham2022crosspoint, shao2022scrnet}, just to name a few. A good survey can be found in \cite{le2020contrastive}.

		
		

\section{Deep Star-Convexified Lucas-Kanade}
\subsection{Preliminaries}
\subsubsection{Homography Estimation}\label{sssec:homography}
Homography refers to a mapping between two planar projections of an image whose parameters are represented by a 3$\times$3 transformation matrix in a homogenous coordinates space and need to be estimated. The LK method is one of the classic algorithms in computer vision for homography estimation between images. Its nonconvex objective can be formulated as follows:
\begin{align}\label{eqn:Lucas-Kanade}
    \min_{\omega\in\Omega} \|x_t - g(x_s; \omega)\|_F^2,
\end{align}
where $x_s, x_t\in\mathcal{I}$ denote a source and target input images (equivalent to $x=\{x_s, x_t\}$ in Eq. \ref{eqn:downstream_task}), $\omega\in\Omega\subseteq\mathbb{R}^{3\times3}$ denotes the homography parameters, $g:\mathcal{I}\times\Omega\rightarrow\mathcal{I}$ denotes a nonconvex warping function, and $\|\cdot\|_F$ is the Frobenius norm. The LK algorithm uses GD to optimize Eq.~\ref{eqn:Lucas-Kanade}. 

\subsubsection{DeepLK}
Recently Zhao \etal \cite{zhao2021deep} proposed a deep learning based LK method (DeepLK) that essentially rewrites Eq. \ref{eqn:Lucas-Kanade} as follows:
\begin{align}\label{eqn:deep-Lucas-Kanade}
    \min_{\omega\in\Omega} \|f_t(x_t; \theta_t^*) - g(f_s(x_s; \theta_s^*); \omega)\|_F^2,
\end{align}
where functions $f_s:\mathcal{I}\times\Theta_s\rightarrow\mathcal{I}, f_t:\mathcal{I}\times\Theta_t\rightarrow\mathcal{I}$ denote two deep neural networks parametrized by the learned $\theta_s^*\in\Theta_s, \theta_t^*\in\Theta_t$, respectively (equivalent to $f=\{f_s, f_t\}, \theta^*=\{\theta_s^*, \theta_t^*\}, \Theta = \Theta_s \bigcup \Theta_t$ in Eq. \ref{eqn:downstream_task}), which transfer the source and target images into another two images. Then the original LK method can be directly applied to such transferred images for homography estimation with no change. 

\subsubsection{Star-Convexity}
\begin{defi}[Star-Convexity \cite{lee2016optimizing}]
A function $f:\mathbb{R}^n\rightarrow\mathbb{R}$ is {\em star-convex} if there is a global minimum $\omega^*\in\mathbb{R}^n$ such that for all $\lambda\in [0, 1]$ and $\omega\in\mathbb{R}^n$, it holds that
\begin{align}\label{eqn:sc_def1}
    f((1-\lambda) \omega^* + \lambda \omega) &\leq  (1-\lambda) f(\omega^*) + \lambda f(\omega).
\end{align}
\end{defi}

\begin{defi}[Strong Star-Convexity \cite{pmlr-v125-hinder20a}]
A differentiable function $f:\mathbb{R}^n\rightarrow\mathbb{R}$ is {\em $\mu$-strongly star-convex} with constant $\mu>0$ if there is a global minimum $\omega^*\in\mathbb{R}^n$ such that for $\forall \omega\in\mathbb{R}^n$, it holds 
\begin{align}\label{eqn:ssc_def1}
    f(\omega^*) \geq f(\omega) + \nabla f(\omega)^T(\omega^* - \omega) + \frac{\mu}{2}\|\omega^* - \omega\|_2^2, 
\end{align}
where $\nabla$ denotes the (sub)gradient operator, $(\cdot)^T$ denotes the matrix transpose operator, and $\|\cdot\|_2$ denotes the $\ell_2$ norm. Note when $\mu=0$ Eq. \ref{eqn:ssc_def1}, will become equivalent to Eq.~\ref{eqn:sc_def1}.
\end{defi}

\begin{lemma}\label{lem:1}
The following conditions hold iff
a function $f:\mathbb{R}^n\rightarrow\mathbb{R}$ is $\mu$-strongly star-convex, given a global minimum $\omega^*\in\mathbb{R}^n$ and $\forall \lambda\in[0,1], \forall \omega\in\mathbb{R}^n$: 
\begin{align}
    & f(\omega^*) \leq f(\Tilde{\omega}) - \frac{\mu}{2}\|\omega^*-\Tilde{\omega}\|_2^2, \label{eqn:lem-1} \\
    & f(\Tilde{\omega}) \leq (1-\lambda) f(\omega^*) + \lambda f(\omega) - \frac{\lambda(1-\lambda)\mu}{2}\|\omega^*-\omega\|_2^2, \label{eqn:lem-2}
\end{align}
where $\Tilde{\omega} = (1-\lambda) \omega^* + \lambda \omega$.
\end{lemma}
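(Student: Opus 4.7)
The plan is to prove both directions of the equivalence separately.

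\textbf{Forward direction ($\mu$-strong star-convexity $\Rightarrow$ Eq.~\ref{eqn:lem-1} and Eq.~\ref{eqn:lem-2}).} I parametrize the segment from $\omega^*$ to $\omega$ by $\omega_t := (1-t)\omega^* + t\omega$ for $t \in [0,1]$ and set $\phi(t) := f(\omega_t)$, so that $\phi(0) = f(\omega^*)$, $\phi(1) = f(\omega)$, $\phi(\lambda) = f(\Tilde{\omega})$, and $\phi'(t) = \nabla f(\omega_t)^T(\omega-\omega^*)$. Applying Eq.~\ref{eqn:ssc_def1} at the point $\omega_t$ (rather than at $\omega$), and using $\omega^*-\omega_t = -t(\omega-\omega^*)$, I obtain $\phi(0) \geq \phi(t) - t\phi'(t) + \frac{\mu t^2}{2}\|\omega-\omega^*\|_2^2$, which rearranges to $t\phi'(t) \geq \phi(t)-\phi(0) + \frac{\mu t^2}{2}\|\omega-\omega^*\|_2^2$. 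Dividing by $t^2$ identifies the left-hand side as $\frac{d}{dt}\bigl[(\phi(t)-\phi(0))/t\bigr]$. Because $\omega^*$ is an interior global minimum of the differentiable $f$, $\nabla f(\omega^*)=0$, so $\lim_{t\to 0^+}(\phi(t)-\phi(0))/t = \phi'(0) = 0$. Integrating the resulting differential inequality from $0$ to $\lambda$ yields $\phi(\lambda)-\phi(0) \geq \frac{\mu \lambda^2}{2}\|\omega-\omega^*\|_2^2 = \frac{\mu}{2}\|\Tilde{\omega}-\omega^*\|_2^2$, which is Eq.~\ref{eqn:lem-1} after rearrangement. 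Integrating from $\lambda$ to $1$ and clearing denominators similarly produces Eq.~\ref{eqn:lem-2}.

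\textbf{Converse direction (Eq.~\ref{eqn:lem-1} and Eq.~\ref{eqn:lem-2} $\Rightarrow$ $\mu$-strong star-convexity).} It suffices to extract Eq.~\ref{eqn:ssc_def1} from Eq.~\ref{eqn:lem-2} alone. Define the slack $h(\lambda) := (1-\lambda)f(\omega^*) + \lambda f(\omega) - \frac{\mu\lambda(1-\lambda)}{2}\|\omega^*-\omega\|_2^2 - f(\Tilde{\omega})$. By Eq.~\ref{eqn:lem-2}, $h \geq 0$ on $[0,1]$; substituting $\lambda = 1$ (where $\Tilde{\omega}=\omega$ and the quadratic term vanishes) gives $h(1)=0$. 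Hence $h$ attains its minimum on $[0,1]$ at $\lambda=1$, so the left-derivative satisfies $h'(1^-) \leq 0$. Differentiating $h$ termwise, with $\frac{d}{d\lambda}f(\Tilde{\omega}) = \nabla f(\Tilde{\omega})^T(\omega-\omega^*)$, and evaluating at $\lambda=1$ leaves $-f(\omega^*)+f(\omega)+\frac{\mu}{2}\|\omega^*-\omega\|_2^2+\nabla f(\omega)^T(\omega^*-\omega) \leq 0$, which rearranges precisely to Eq.~\ref{eqn:ssc_def1}.

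The main obstacle is the forward direction, specifically the passage from the pointwise strong-star-convexity inequality to the integral bounds Eq.~\ref{eqn:lem-1} and Eq.~\ref{eqn:lem-2}. The clean manipulation relies on recognizing that $(\phi(t)-\phi(0))/t$ is the right antiderivative and on the vanishing boundary value $\lim_{t\to 0^+}(\phi(t)-\phi(0))/t = 0$, which implicitly uses the differentiability of $f$ together with first-order optimality $\nabla f(\omega^*) = 0$ at the interior minimum. The converse direction is comparatively routine, amounting to a one-sided derivative check of a slack function at a boundary minimum.
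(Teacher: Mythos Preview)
Your argument is correct and in fact more complete than the paper's own proof. For the forward direction, the paper obtains Eq.~\ref{eqn:lem-1} by asserting that the one-dimensional slice of $f$ through $\omega^*$ and $\omega$ is convex, then swapping the roles of $\omega^*$ and $\Tilde{\omega}$ in Eq.~\ref{eqn:ssc_def1} and using $\nabla f(\omega^*)=0$; for Eq.~\ref{eqn:lem-2} it introduces $g(\omega)=f(\omega)-\tfrac{\mu}{2}\|\omega\|_2^2$, observes that $g$ is star-convex, and applies Eq.~\ref{eqn:sc_def1} to $g$. Your route is genuinely different: you apply Eq.~\ref{eqn:ssc_def1} at every point $\omega_t$ of the segment, turn it into the differential inequality $\tfrac{d}{dt}\big[(\phi(t)-\phi(0))/t\big]\geq \tfrac{\mu}{2}\|\omega-\omega^*\|_2^2$, and recover both conditions by integrating over $[0,\lambda]$ and $[\lambda,1]$. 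This bypasses the paper's unjustified claim that the slice is convex (strong star-convexity does not obviously force this), and it handles both inequalities uniformly rather than by two separate devices. The paper's quadratic-shift trick for Eq.~\ref{eqn:lem-2} is arguably slicker when one is willing to invoke the equivalence between Eq.~\ref{eqn:sc_def1} and the gradient form of star-convexity. Finally, the paper does not address the converse direction at all; your slack-function argument via $h'(1^-)\leq 0$ supplies it cleanly.
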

\begin{proof}
As illustrated in Fig. \ref{fig:sc-demo}, a cut through $\omega^*, \omega$ forms a convex shape if $f$ is star-convex. Therefore, since $\nabla f(\omega^*) = \mathbf{0}$, Eq. \ref{eqn:ssc_def1} will lead to Eq. \ref{eqn:lem-1} by replacing $\omega$ with $\Tilde{\omega}$ when switching the notations of $\omega^*, \omega$ in the equation. 

Letting $g(\omega) = f(\omega) - \frac{\mu}{2}\|\omega\|_2^2$, based on Eq.~\ref{eqn:ssc_def1} we have $g(\omega^*)\geq g(\omega) + \nabla g(\omega)^T(\omega^* - \omega)$, \ie $g$ is star-convex. Then based on $g$ and Eq. \ref{eqn:sc_def1}, we can achieve Eq.~\ref{eqn:lem-2}.
\end{proof}

\setlength{\columnsep}{10pt}%
\begin{wrapfigure}{r}{.37\linewidth}
\centering 
\vspace{-5mm}
\includegraphics[width=1\linewidth]{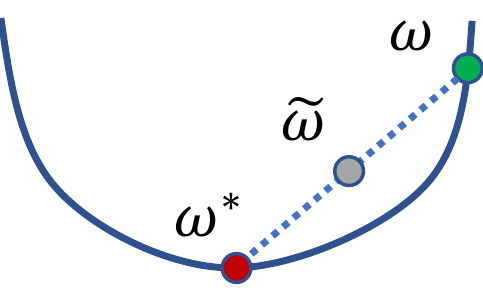}
\vspace{-5mm}
\caption{Geometric relations between $\omega^*, \omega, \Tilde{\omega}$.
}
\vspace{-3mm}
\label{fig:sc-demo}
\end{wrapfigure} 

Eq. \ref{eqn:lem-1} implies that $\omega^*$ will be a (local) minimum if it holds for $\forall \omega, \lambda$. In fact, Lemma \ref{lem:1} discusses the (tight) strong star-convexity {\em with no gradients}. In our approach we will use this lemma to incorporate the strong star-convexity as constraints in network training. 


\subsection{Approach}\label{ssec:approach}
\bfsection{Geometric Constraints}
Learning exact star-convex loss landscapes with the strong condition is challenging, even in very high dimensional spaces and in local regions. One common solution is to introduce slack variables as penalty to measure soft-margins. However, this may significantly break the smoothness of star-convex loss, and thus destroy the nice convergence property of gradient descent (see the results in Fig. \ref{fig:perf-condition} in our experiments). Therefore, in order to preserve the smoothness, we consider two geometric constraints that have capability to improve the loss landscape smoothness at different levels, just in case that one is too strong to be learned properly. Ordered by the strength of each geometric constraint from weak to strong, they are:
\begin{itemize}[nosep, leftmargin=*]
    
    \item {\em A strong star-convexity constraint in Eq. \ref{eqn:lem-1}:} This constraint implies that there exists a quadratic shape as the lower envelope of the loss landscape with minimum at $\omega^*$. Meanwhile, it also guarantees that the loss at the ground-truth $\omega^*$ on the (local surface of) loss landscape will reach the (local) minimum, as requested by star-convexity. 
    
    \item {\em A second strong star-convexity constraint in Eq. \ref{eqn:lem-2}:} This constraint imposes strong convexity on all the curves that connect $\omega^*$ with any other point on the loss landscape.
\end{itemize}

\setlength{\columnsep}{10pt}%
\begin{wrapfigure}{r}{.37\linewidth}
\centering 
\vspace{0mm}
\includegraphics[width=1\linewidth]{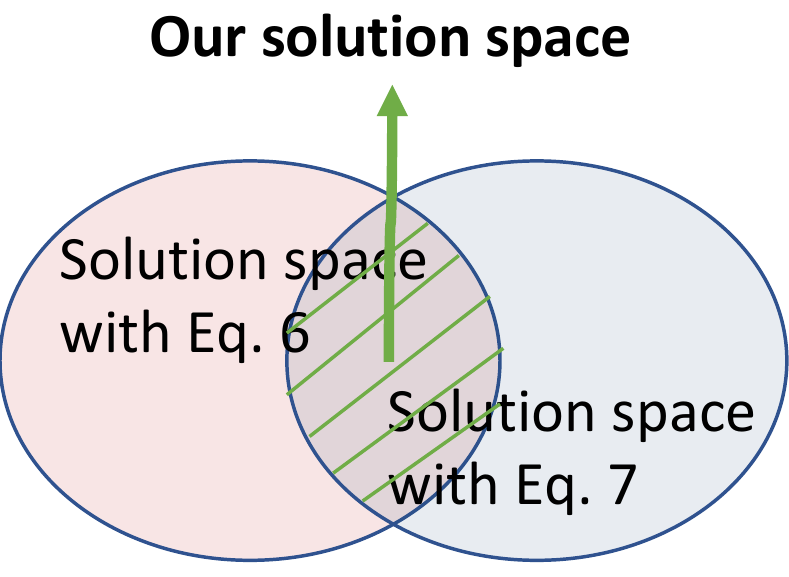}
\vspace{-5mm}
\caption{Illustration of solution spaces induced by the constraints.}
\vspace{-5mm}
\label{fig:solution_space}
\end{wrapfigure} 

The fundamental difference between the two strong star-convexity constraints is the positioning of $\nabla f$, where Eq. \ref{eqn:lem-1} is posited at $\omega^*$ while Eq. \ref{eqn:lem-2} is posited at $\omega$. As illustrated in Fig. \ref{fig:solution_space}, both constraints can lead to their own solution spaces for the same objective, but with an overlap where solutions will better approximate star-convexity.
This insight motivates our formulation.

\bfsection{Our Formulation}
Let $(x_i, \omega_i^*)$ denote a training sample with data $x_i$ (\eg image pairs for alignment) and its ground truth $\omega_i^*$. To simplify our notations, in the sequel we will denote $h_{\theta}(\cdot)=\ell_{f}(\cdot; x_i, \theta), \Tilde{\omega}_i = (1-\lambda) \omega_i^* + \lambda \omega_i$, and we wish to learn $h_{\theta}$ to be strongly star-convex. Now based on our considerations on geometric conditions, we are ready to formulate our learning framework as follows:
\begin{align} 
    \hspace{-0.5mm}\min_{\theta} & \sum_i \left\{h_{\theta}(\omega_i^*) + \rho\mathbb{E}_{\omega_i\sim\mathcal{N}_{\omega_i^*}}\Big\{\max_{\lambda\in[0,1]}\epsilon_{\omega_i} + \max_{\lambda\in[0,1]}\xi_{\omega_i}\Big\} \right\} \label{eqn:problem} \\
    \hspace{-0.5mm}\mbox{s.t.} \hspace{1.5mm}
    & h_{\theta}(\omega_i^*) \leq h_{\theta}(\Tilde{\omega}_i) - \frac{\mu}{2}\|\omega_i^* - \Tilde{\omega}_i\|_2^2 + \epsilon_{\omega_i}, \label{eqn:con1} \\
    & h_{\theta}(\Tilde{\omega}_i) \leq (1-\lambda)h_{\theta}(\omega_i^*) + \lambda h_{\theta}(\omega_i) \nonumber \\
    & \hspace{20mm} - \frac{\lambda(1-\lambda)\mu}{2}\|\omega_i^* - \omega_i\|_2^2 + \xi_{\omega_i}, \label{eqn:con2} \\
    & \forall\epsilon_{\omega_i}\geq0, \forall \xi_{\omega_i}\geq0, \forall i, \nonumber
\end{align}
where $\epsilon_{\omega_i}, \xi_{\omega_i}$ are the slack variables for hinge losses, $\mathcal{N}_{\omega_i^*}$ is a (local) neighborhood around $\omega_i^*$ where $\omega_i$ is sampled from, $\rho\geq0, \mu\geq0$ are predefined trade-off and surface sharpness parameters, respectively, and $\mathbb{E}$ denotes the expectation.

\begin{algorithm}[t]
    \SetAlgoLined
    \SetKwInOut{Input}{Input}\SetKwInOut{Output}{Output}
    \Input{training data $\{(x_i, \omega^*_i)\}$, LK loss function $h_{\theta}$ and a network architecture $f$, hyperparameters $\lambda, \mu, \rho$}
    \Output{network parameters $\theta^*$}
    \BlankLine
    Randomly initialize $\theta$;
            
    \Repeat{Converge or maximum number of iterations is reached}{
        
        Randomly select a training image pair with its ground truth $(x_i, \omega^*_i)$;
        
        Randomly sample (multiple) $\omega_i\sim\mathcal{N}_{\omega_i^*}$; 
        
        Update $\theta$ by solving Eq. \ref{eqn:problem} with strong star-convex constraints in Eqs. \ref{eqn:con1} and \ref{eqn:con2};
    }
    \Return $\theta^* \leftarrow \theta$;
    \caption{PRISE: Deep Star-Convexified Lucas-Kanade}\label{alg:prise}
\end{algorithm}

\bfsection{Contrastive Adversarial Training}
Our approach is highly related to adversarial training and contrastive learning, since during training we try to create new fake samples $\Tilde{\omega}_i, \omega_i$, compare their losses with $h_{\theta}(\omega^*)$, and solve a minimax problem defined in Eq. \ref{eqn:problem}. The contrastive learning comes from the nature of strong star-convexity, leading to extra hinge losses. The adversarial training starts from finding the values of $\lambda$ that return the maximum $\epsilon_{\omega_i}, \xi_{\omega_i}$, respectively. Note that here the values are allowed to be different for the two hinge losses. Both together aim to control the loss landscapes towards being star-convex.


\bfsection{Implementation}
Eqs. \ref{eqn:con1} and \ref{eqn:con2} define a large pool of inequalities for each data point with varying $\omega_i$ and $\lambda$, where any inequality returns a hinge loss. To leverage our computational cost, motivated by the training algorithm for DeepLK we propose a similar {\em sampling} based training algorithm, as listed in Alg. \ref{alg:prise}\footnote{In our code we use batch-based implementation for fast computation.}. Specifically,
\begin{itemize}[nosep, leftmargin=*]
    \item {\em Sampling from $\mathcal{N}_{\omega_i^*}$ for $\omega_i$:} Same as stochastic gradient descent (SGD), we sample a fixed number of $\omega_i$ for each ground truth, and then compute the average.

    \item {\em Sampling from $[0,1]$ for $\lambda$:} We could solve the maximum problems using FSGM with careful parameter tuning. However, this will introduce huge computational burden, as the complexity will be proportional to the number of samples for $\omega_i$ times the number of data points $(x_i, \omega_i^*)$. Therefore, to address the computational complexity issue, we instead simply take $\lambda$ as a predefined hyperparameter that are shared by $\epsilon_{\omega_i}, \xi_{\omega_i}$. We have evaluated the way of sampling multiple copies of $\lambda$ and then choosing the maximum hinge losses with no sharing for learning. We observe that the results are very similar to those with the predefined one, but the training time is much longer.
\end{itemize}
As a demonstration, we take the network in DeepLK as our backbone and use the same LK loss as DeepLK for training. At test time, we substitute the learned network weights $\theta^*$ into the right side of Eq. \ref{eqn:downstream_task} and solve the original non-convex image alignment problem using the LK method.


\bfsection{Star-Convexity \vs Convexity}
Star-convexity enforces to learn one-point convexity \cite{li2017convergence}, where we only impose the convexity at the ground truth within a local region. In contrast, convexity requires much more data points, making the training of deep models much less efficient. 

\subsection{Analysis}\label{ssec:analysis}
\bfsection{Relations to DeepLK}
In fact, DeepLK imposes the following two conditions on the minimization of the LK loss:
\begin{align}\label{eqn:deeplk}
    \left\{
    \begin{array}{l}
         h_{\theta}(\omega_i^*) \leq h_{\theta}(\omega_i) - \|\omega_i^* - \omega_i\|_2^2, \\
         h_{\theta}(\Tilde{\omega}_i) \leq h_{\theta}(\omega_i) - (1-\lambda^2)\|\omega_i^* - \omega_i\|_2^2.
    \end{array}
    \right. 
\end{align}
Note that when $\lambda=0$, these two inequalities will become the same. Below we will only discuss the lower equation in Eq. \ref{eqn:deeplk}. Then we have the following lemma:
\begin{lemma}\label{lem:equivalence}
    It holds for the RHS of Eqs. \ref{eqn:deeplk} and \ref{eqn:con2} that   
    \begin{align}
         (1-\lambda)h_{\theta}(\omega_i^*) + \lambda &h_{\theta}(\omega_i) - \frac{\lambda(1-\lambda)\mu}{2}\|\omega_i^* - \omega_i\|_2^2 \nonumber \\
        \underset{}{\overset{\mu\geq2}{\leq}} & h_{\theta}(\omega_i) - (1-\lambda^2)\|\omega_i^* - \omega_i\|_2^2,
    \end{align}
    with the same weights $\theta$ and the equality holds when $\mu=2$.
\end{lemma}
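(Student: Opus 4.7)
The plan is to bridge the two right-hand sides by invoking the upper DeepLK condition in Eq. \ref{eqn:deeplk} to eliminate $h_{\theta}(\omega_i^*)$ from the PRISE expression, and then perform elementary algebra on the remaining coefficients of $\|\omega_i^* - \omega_i\|_2^2$, with $\mu\geq2$ being exactly the threshold at which the comparison tilts the right way. I would start by writing
\[
A := (1-\lambda)h_{\theta}(\omega_i^*) + \lambda h_{\theta}(\omega_i) - \tfrac{\lambda(1-\lambda)\mu}{2}\|\omega_i^*-\omega_i\|_2^2,
\]
\[
B := h_{\theta}(\omega_i) - (1-\lambda^2)\|\omega_i^*-\omega_i\|_2^2,
\]
and aim to show $A\leq B$ whenever $\mu\geq 2$, with equality at $\mu=2$.

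Next I would apply the top inequality of Eq. \ref{eqn:deeplk}, namely $h_{\theta}(\omega_i^*) \leq h_{\theta}(\omega_i) - \|\omega_i^*-\omega_i\|_2^2$, multiplied by the nonnegative weight $(1-\lambda)$, to upper bound the first term of $A$. Collecting terms gives
\[
A \;\leq\; h_{\theta}(\omega_i) - (1-\lambda)\Bigl(1 + \tfrac{\lambda\mu}{2}\Bigr)\|\omega_i^*-\omega_i\|_2^2.
\]
Comparing with $B = h_{\theta}(\omega_i) - (1-\lambda)(1+\lambda)\|\omega_i^*-\omega_i\|_2^2$, the claim $A\leq B$ reduces to the scalar inequality $(1-\lambda)(1+\lambda\mu/2) \geq (1-\lambda)(1+\lambda)$ for every $\lambda\in[0,1]$. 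Since $1-\lambda\geq 0$ and $\lambda\geq 0$, this is equivalent to $\mu\geq 2$, which is exactly the hypothesis.

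Finally, I would verify the equality case. Setting $\mu=2$ makes the coefficient comparison tight, i.e.\ $(1-\lambda)(1+\lambda\mu/2) = (1-\lambda)(1+\lambda) = 1-\lambda^2$, so the two bounds coincide term by term and $A = B$ under the additional mild assumption that the top line of Eq. \ref{eqn:deeplk} is saturated (so that our only relaxation step was tight). This gives both directions stated in the lemma.

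The bookkeeping is genuinely routine; the only ``obstacle'' worth flagging is noticing that the second DeepLK inequality alone does not suffice --- one must pull in the first DeepLK inequality to swap $h_{\theta}(\omega_i^*)$ for a bound in terms of $h_{\theta}(\omega_i)$, after which the coefficient in front of $\|\omega_i^*-\omega_i\|_2^2$ is the only thing that moves and $\mu\geq 2$ falls out immediately.
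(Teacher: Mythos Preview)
Your argument is correct and structurally identical to the paper's: bound $h_\theta(\omega_i^*)$ from above to eliminate it, collect the coefficient of $\|\omega_i^*-\omega_i\|_2^2$, and read off the threshold $\mu\ge 2$. The one substantive difference is \emph{which} inequality you invoke to do the elimination. The paper's proof cites Eq.~\ref{eqn:lem-1}, i.e.\ the strong star-convexity bound $h_\theta(\omega_i^*)\le h_\theta(\omega_i)-\tfrac{\mu}{2}\|\omega_i^*-\omega_i\|_2^2$, whereas you invoke the first line of Eq.~\ref{eqn:deeplk}. The paper's choice is the more natural one for the surrounding argument: the whole point of the lemma is to show that any $\theta$ obeying the PRISE constraints automatically obeys the DeepLK ones, so the proof should rest only on PRISE's hypotheses (Eq.~\ref{eqn:lem-1}/Eq.~\ref{eqn:con1}), not on DeepLK's. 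Your route is still valid because Eq.~\ref{eqn:lem-1} with $\mu\ge 2$ immediately implies the first line of Eq.~\ref{eqn:deeplk}, but using Eq.~\ref{eqn:lem-1} directly yields the slightly cleaner intermediate bound $A\le h_\theta(\omega_i)-\tfrac{\mu}{2}(1-\lambda^2)\|\omega_i^*-\omega_i\|_2^2$, from which the comparison with $B$ and the equality case at $\mu=2$ drop out in one line without your extra saturation caveat.
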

\begin{proof}
    With the help of Eq. \ref{eqn:lem-1} and simple algebra, we can easily prove this lemma.
\end{proof}
From this lemma, we can see that DeepLK potentially explores a larger solution space than our strong star-convexity. In other words, every solution returned from our PRISE would fall in the solution space of DeepLK. Therefore, we hypothesize that DeepLK may be able to learn some loss landscapes close to star-convex

\begin{figure*}[t]
\centering
\includegraphics[width=0.4\linewidth]{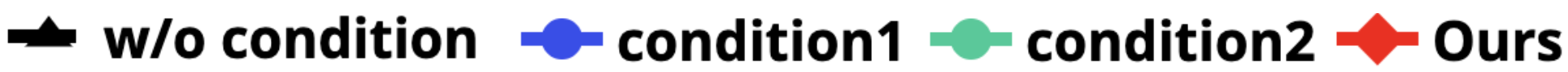}
 
	\begin{minipage}[b]{0.325\textwidth}
		\centering
			\centerline{\includegraphics[width=1.0\linewidth, keepaspectratio,]{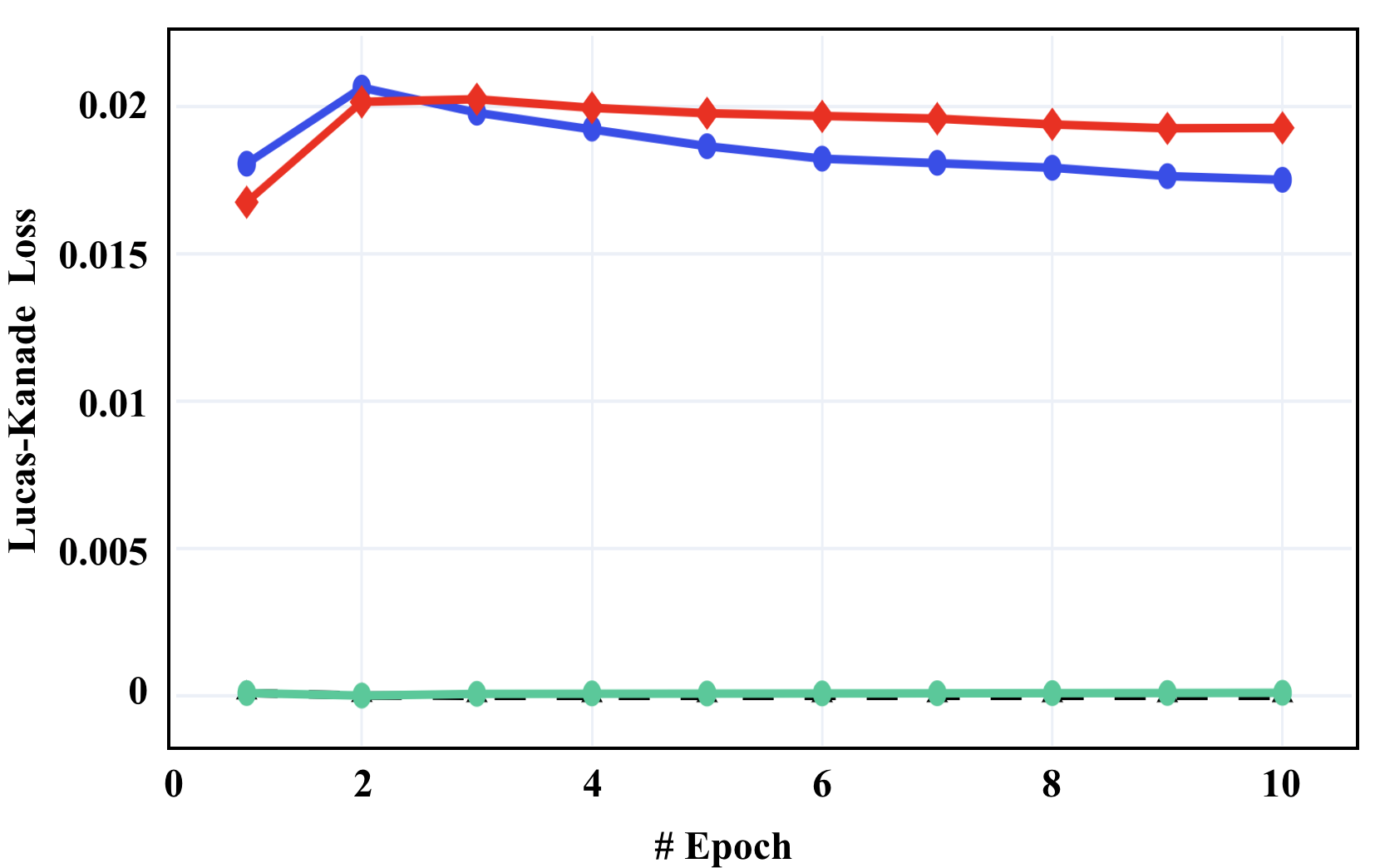}}
			\subcaption{LK loss}
		
	\end{minipage}
	\hfill
	\begin{minipage}[b]{0.325\textwidth}
		\centering
			\centerline{\includegraphics[width=1.0\linewidth,keepaspectratio]{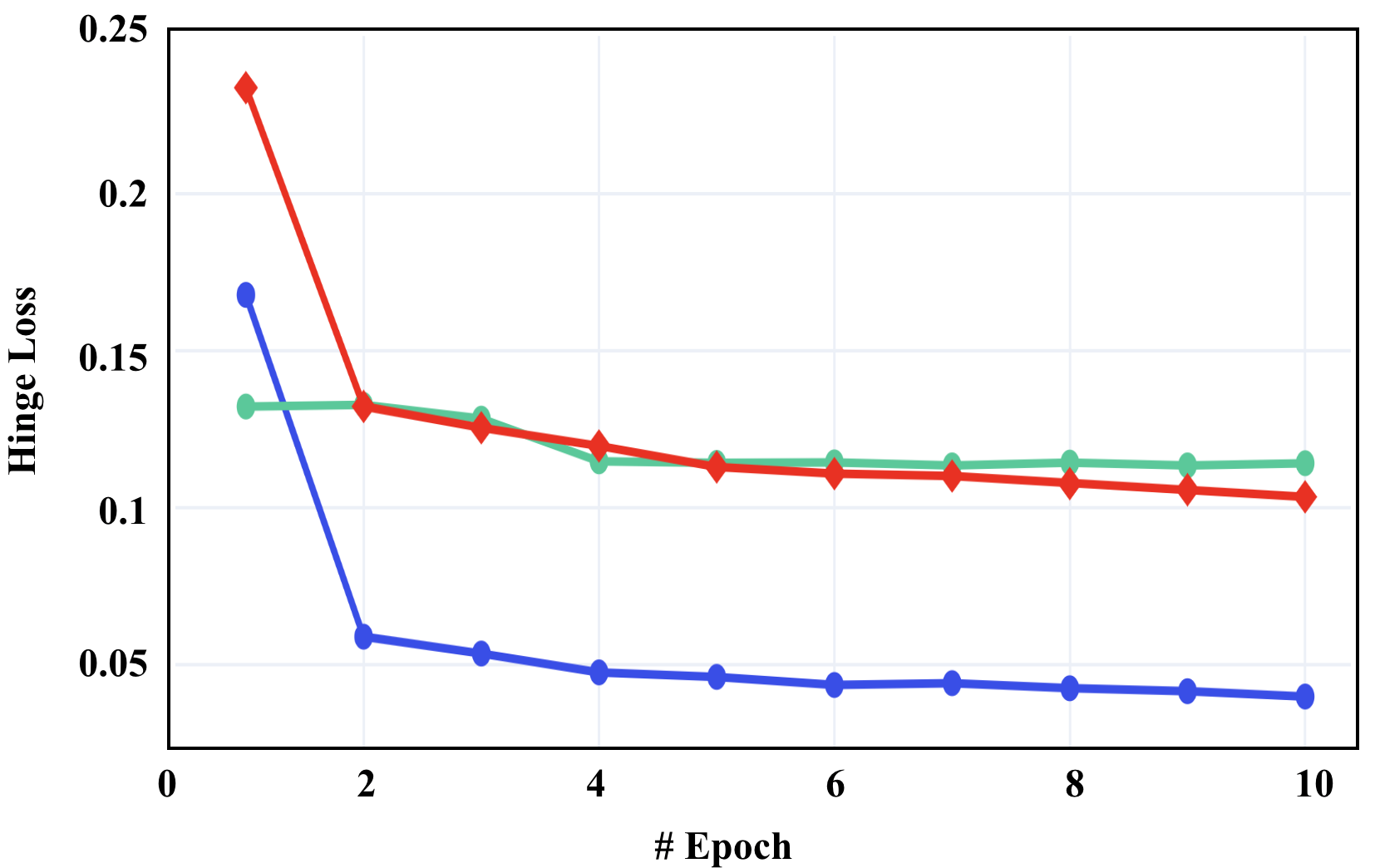}}
			\subcaption{Hinge loss}
		
	\end{minipage}
	\hfill
	\begin{minipage}[b]{0.325\textwidth}
		\centering
			\centerline{\includegraphics[width=1.0\linewidth,keepaspectratio]{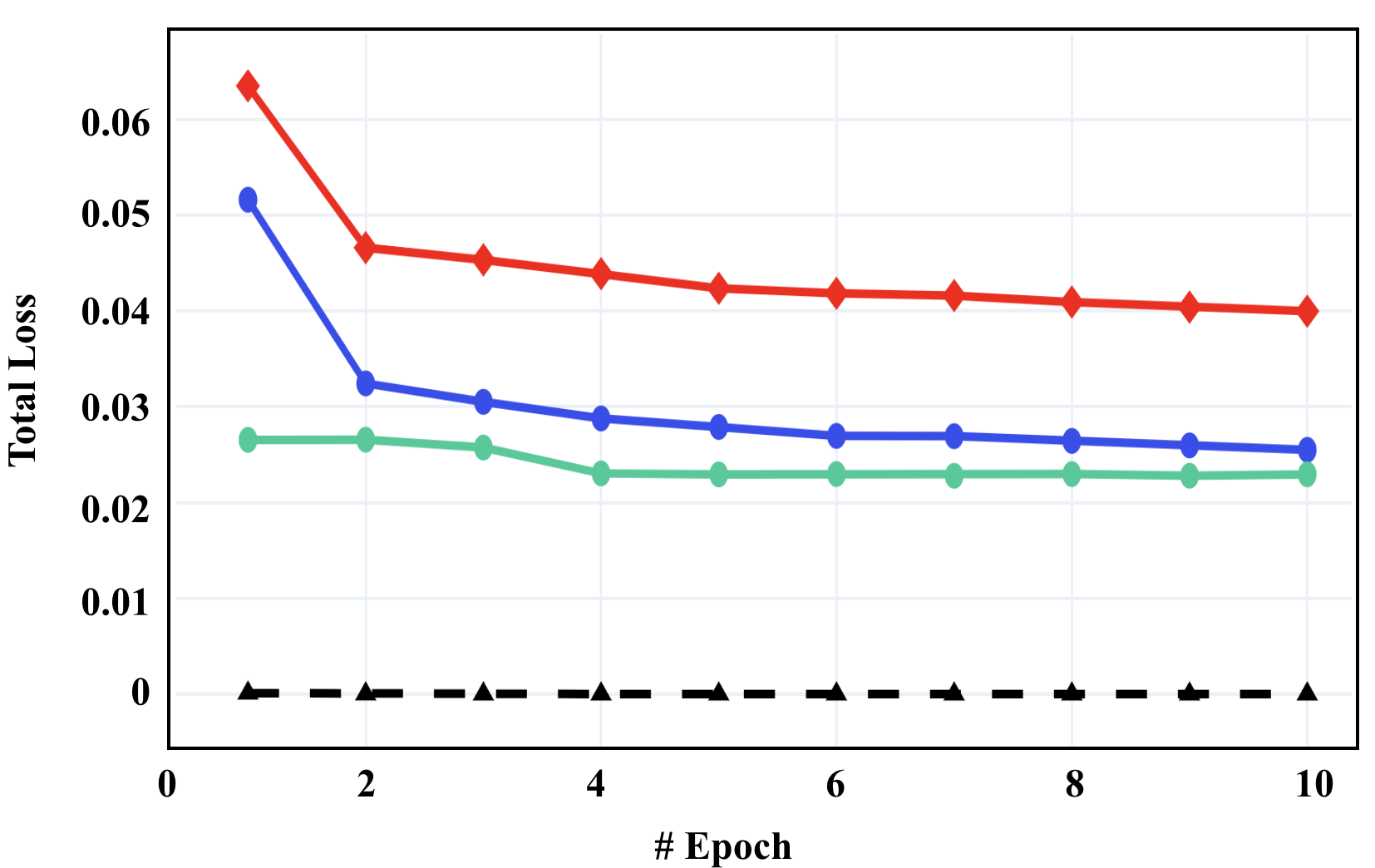}}
			\subcaption{Total loss}
		
	\end{minipage}
    \caption{Comparison on the training loss at Stage 3 on GoogleEarth.}
\label{fig:loss}
\end{figure*}

\begin{figure}[t] 
\centering
\includegraphics[width=0.8\linewidth]{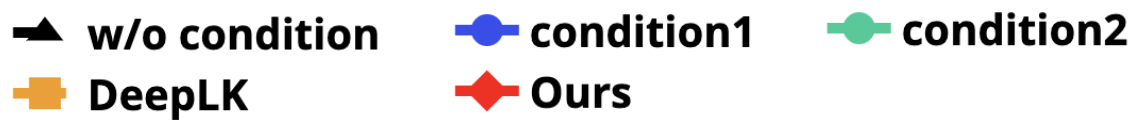}
\centering 
\includegraphics[width=1\columnwidth]{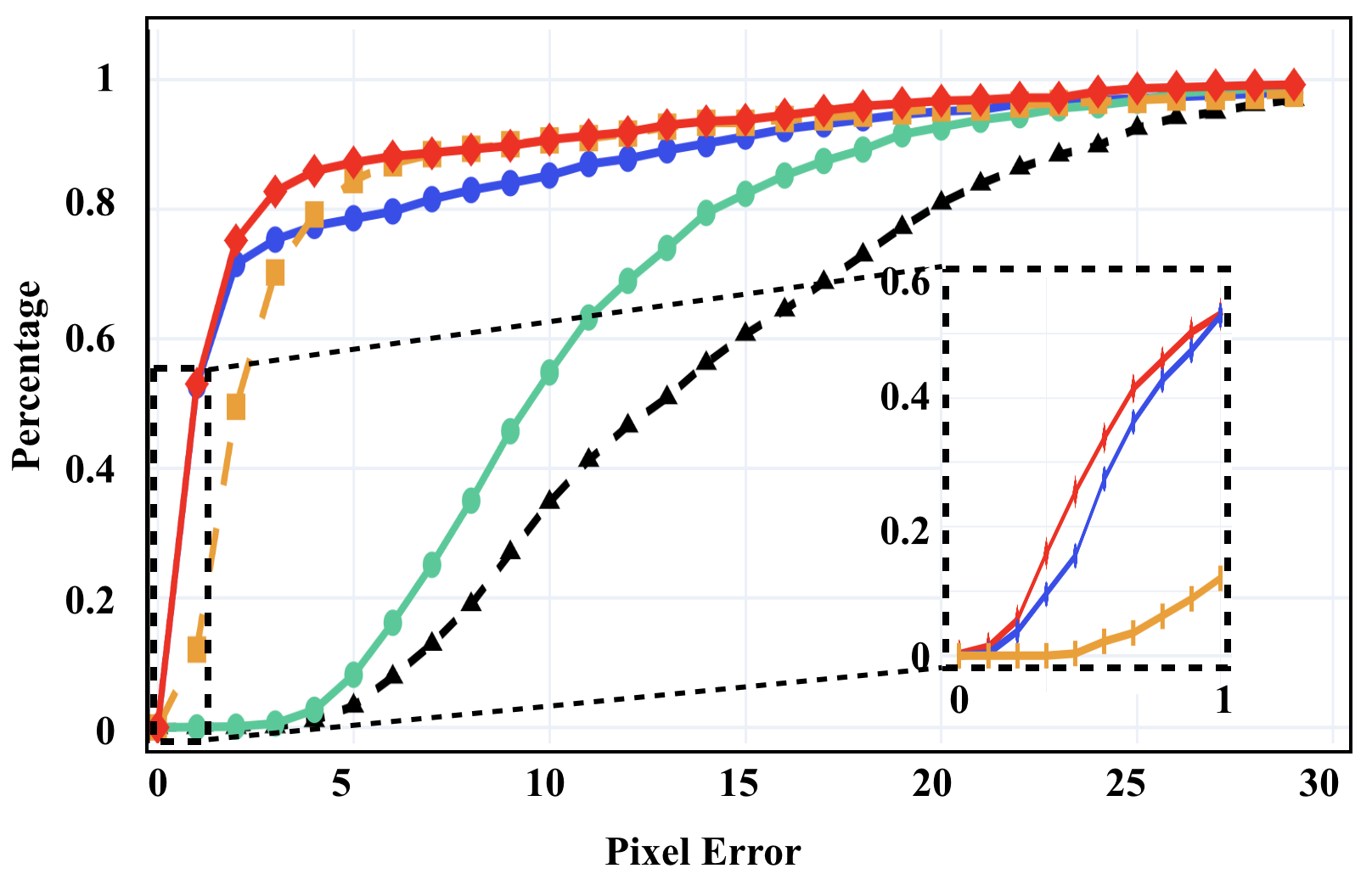}
\vspace{-3mm}
\caption{Performance comparison on GoogleEarth using the default hyperparameter setting.} 
\vspace{-3mm}
\label{fig:perf-condition}
\end{figure} 

\begin{figure*}[t]
	\begin{minipage}[b]{0.325\textwidth}
		\centering
			\centerline{\includegraphics[width=1.0\linewidth, keepaspectratio,]{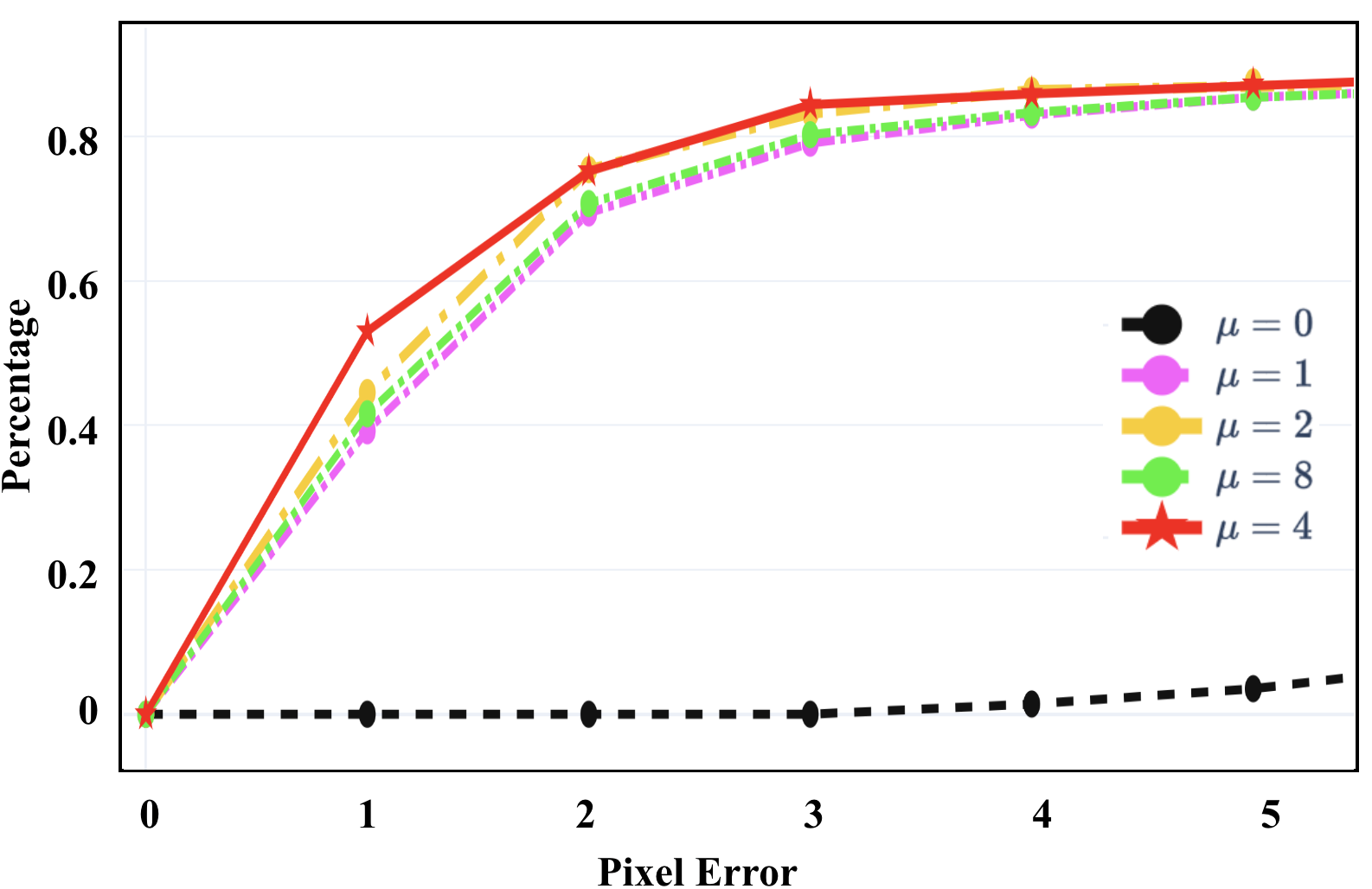}}
			\subcaption{}
	\end{minipage}
	\hfill
	\begin{minipage}[b]{0.325\textwidth}
		\centering
			\centerline{\includegraphics[width=1.0\linewidth,keepaspectratio]{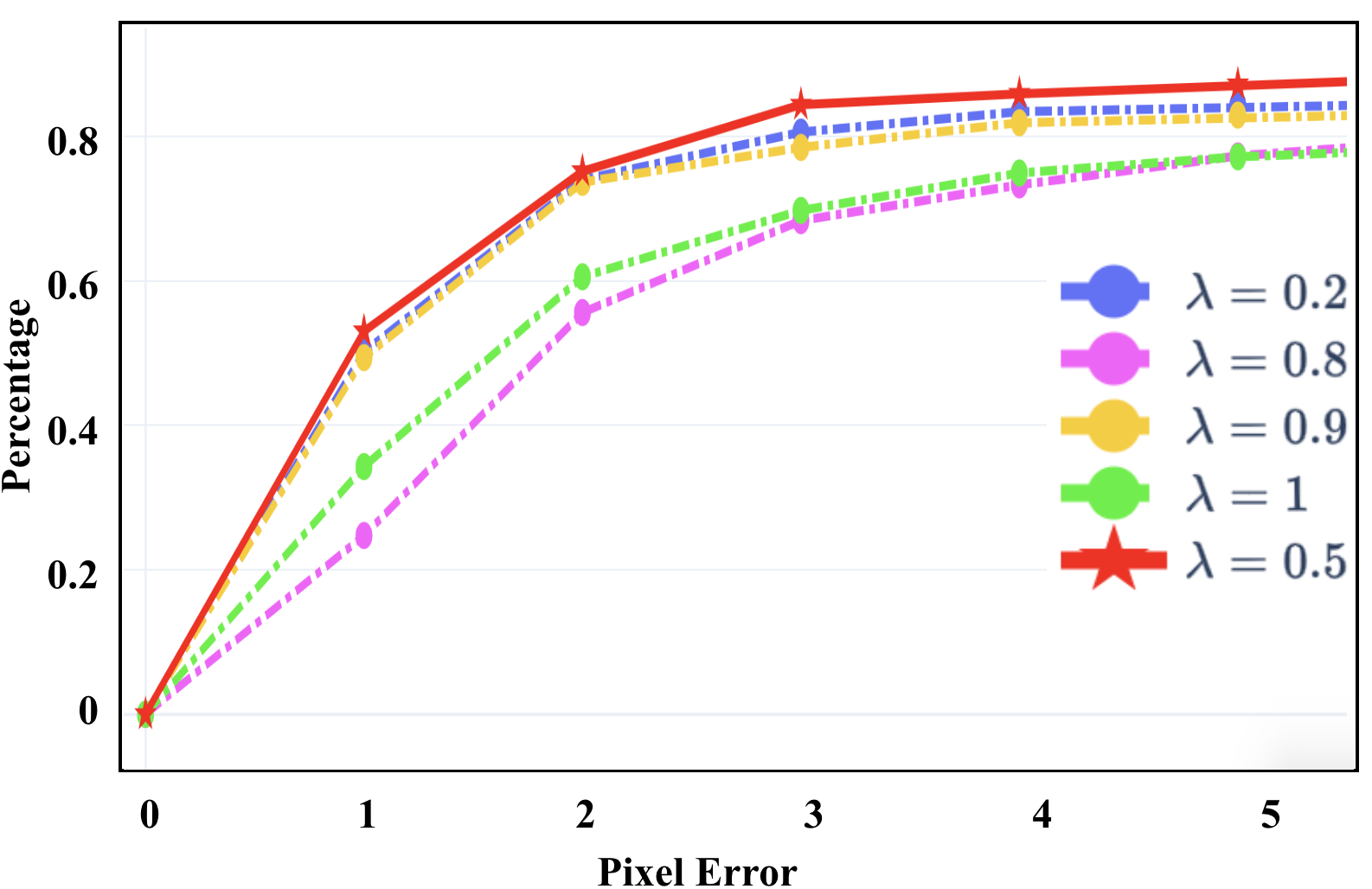}}
			\subcaption{}
	\end{minipage}
	\hfill
	\begin{minipage}[b]{0.325\textwidth}
		\centering
			\centerline{\includegraphics[width=1.0\linewidth,keepaspectratio]{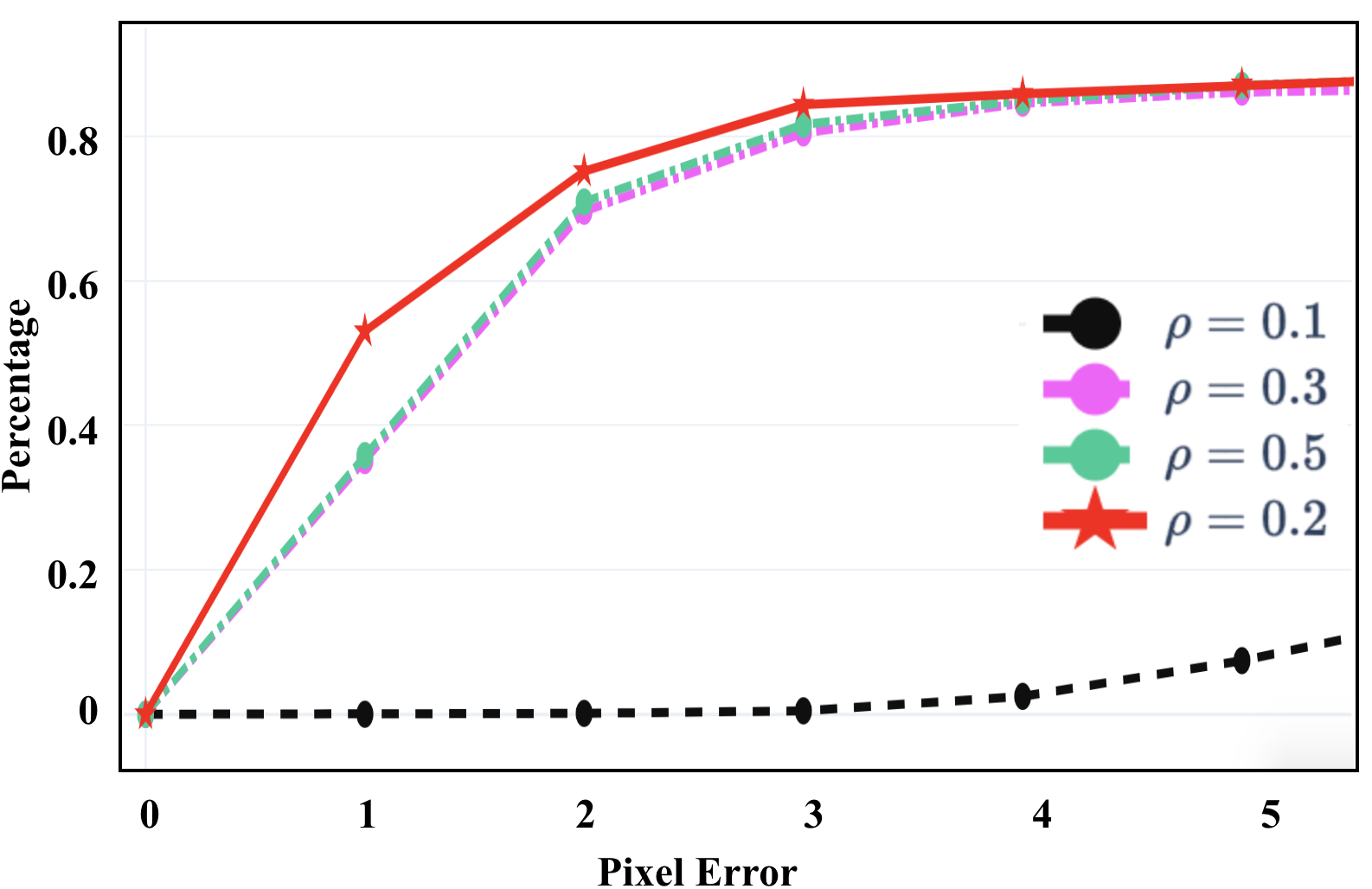}}
			\subcaption{}
	\end{minipage}
    \caption{Pixel error \vs various hyperparameters in Eqs. \ref{eqn:problem}-\ref{eqn:con2} on GoogleEarth. 
    }
\label{fig:PE-param}
\end{figure*}

\begin{figure}[t] 
\centering
\includegraphics[width=0.7\linewidth]{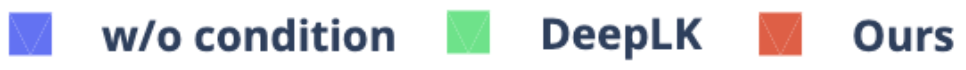}
\centering 
\includegraphics[width=.9\columnwidth]{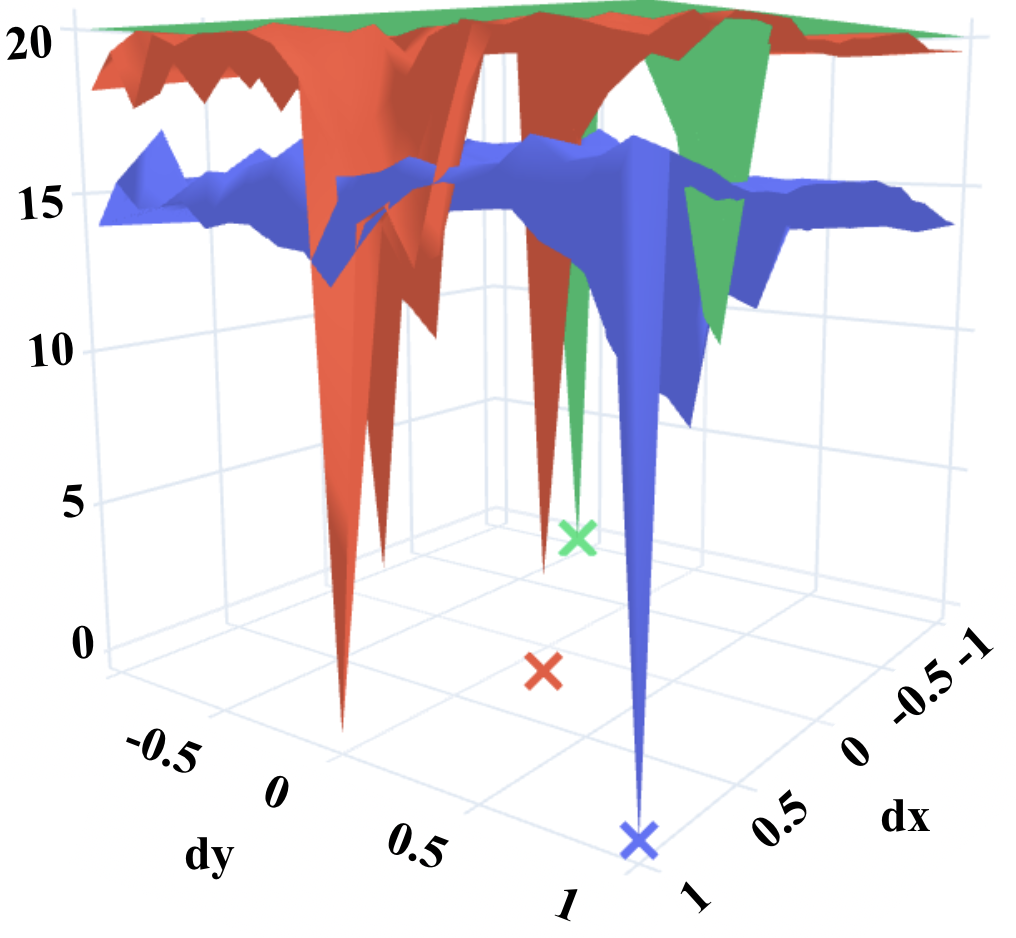}
\caption{Visualization of the loss landscapes of the plain model, DeepLK and PRISE by using a test pair of source image and target image randomly selected from GoogleEarth, and the $\times$'s on the dx-dy plane denote the locations of the homography estimation results from the three methods around the ground truth at $(0,0)$.} 
\label{fig:loss-landscape}
\end{figure} 

\begin{figure}[t] 
\centering 
\includegraphics[width=1\columnwidth]{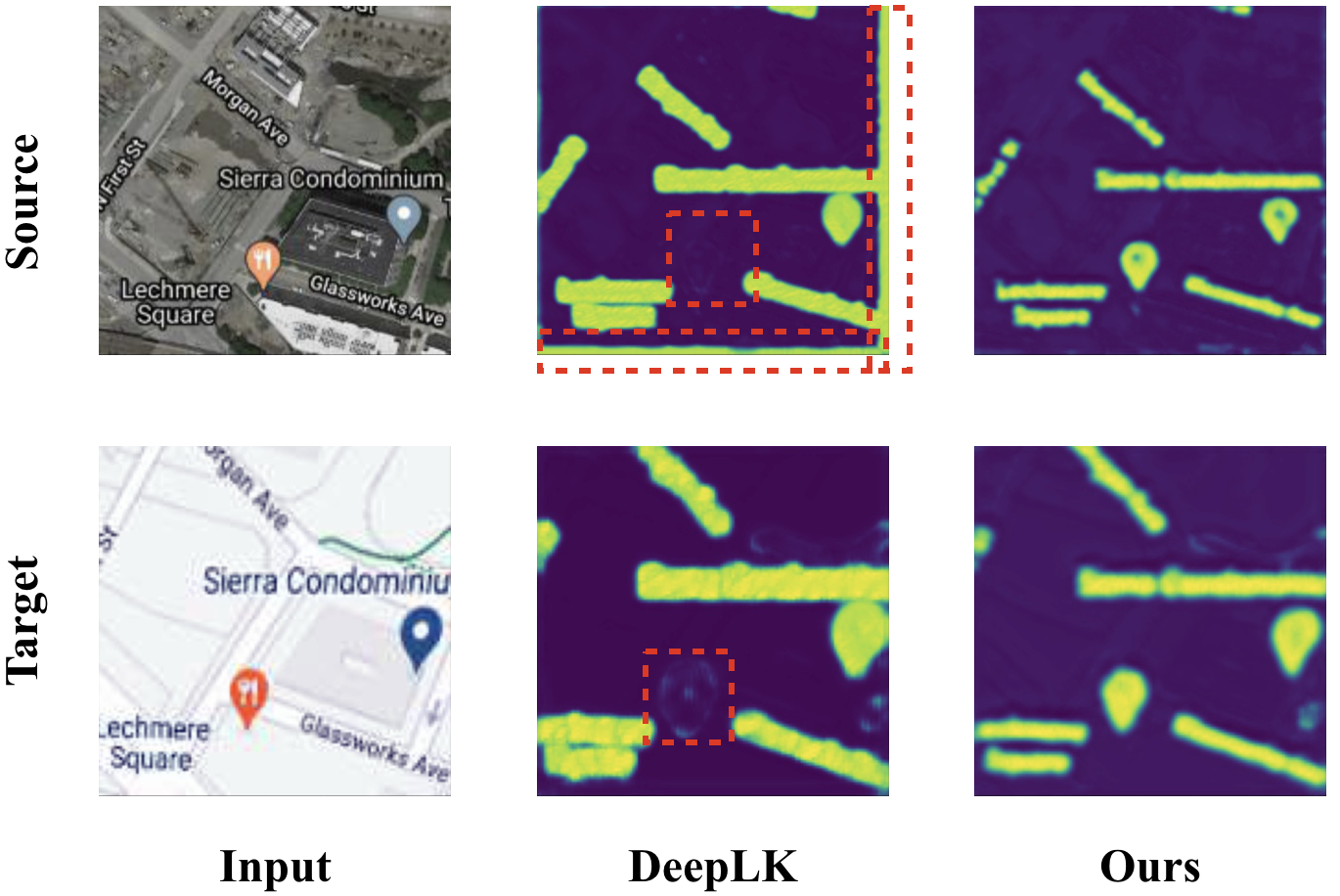}
\vspace{-3mm}
\caption{Feature map comparison on GoogleMap dataset, with boxes emphasizing the differences between DeepLK and ours.} 
\label{fig:perf-condition}\vspace{-5mm}
\end{figure}

\bfsection{Near-Optimal Solutions}
\cite{pmlr-v125-hinder20a} has shown that the GD based algorithms can find near-optimal solutions for the optimization of star-convex functions. To better see this intuitively, we can have the following inequality based on Eq. \ref{eqn:lem-1}:
\begin{align}\label{eqn:w_dis}
    \|\bar\omega^* - \bar{\omega}\|^2 \leq \frac{2}{\mu}\Big[h_{\theta^*}(\bar{\omega}) - h_{\theta^*}(\bar\omega^*)\Big],
\end{align}
where $\bar\omega^*$ denotes the ground truth for a test data point $\bar x$, $\bar\omega$ denotes the prediction based on a network with learned weights $\theta^*$, and $\mu\geq0$ is a constant. For image alignment with the LK loss, $h_{\theta^*}(\bar\omega^*) = 0$ and thus a smaller $h_{\theta^*}(\bar{\omega})$ implies a better solution, which is minimized using the LK loss in training. In fact, however, the distance is upper-bounded by the {\em contrastive loss}, which contributes to the hinge losses that are appended to the LK loss in our formulation. This observation indicates that the contribution of the hinge losses to a well-trained network may be higher than the LK loss (see Fig. \ref{fig:loss} for more details).

\section{Experiments}\label{sec:exp}



\bfsection{Datasets}
We exactly follow the experimental settings in DeepLK \cite{zhao2021deep}. We select an image from each dataset and resize it to $196\times 196$ pixels as input. Then we randomly perturb 4 points in the four corner boxes with size of $64\times64$ and resize the chosen region to a $128\times 128$ template image. We implement the same data generation strategy on three different datasets as follows:
\begin{itemize}[nosep, leftmargin=*]
    \item {\em MSCOCO \cite{lin2014microsoft}:} This is a benchmark dataset in computer vision, including homography estimation, with various foreground and background information. 6K images are sampled from the validation set as our test set.
    
    \item {\em GoogleEarth \cite{zhao2021deep}:} This is a high-resolution cross-season satellite dataset consisting of about 8K training images and 850 test images. Homography estimation on this dataset is challenging as the textural differences between different images are small (compared with the natural images in MSCOCO), and they can easily confuse the LK algorithm which tries to catch the change in grey scale. It has been demonstrated in DeepLK that many recent works on homography estimation failed on this dataset.
    
    \item {\em Google Maps and Satellite (GoogleMap for short) \cite{zhao2021deep}:} This dataset provides multimodal inputs for query and template images, consisting of about 8K training image pairs from Google Map and Google Satellite at the same locations and 888 test pairs. We use the satellite images as the queries and the google map images as the templates to find the homography change from the satellite data to map data. Many models such as DHN \cite{detone2016deep} and MHN \cite{le2020deep} failed to work on this dataset.
\end{itemize}

\bfsection{Baseline Algorithms} We train DHM \cite{detone2016deep} and MHN \cite{le2020deep} from scratch with the best hyperparameters. We use the pretrained model for DeepLK\cite{zhao2021deep} directly. In addition, we use the pretrained models for CLKN \cite{chang2017clkn} and fine-tune it on MSCOCO, GoogleEarth, and GoogleMap to fix the domain gap. Also we compare our approach with a classical algorithm SIFT+RANSAC.

\bfsection{Training \& Testing Protocols}
Following the consistent setting for each dataset in both training and testing, we use the same resolution for the source and target images as the standard datasets. For training, we train our model with best hyperparameters on each dataset for 10 times with random initialization of network weights. For testing, we conduct evaluation on the PEs under different thresholds. We report our results in terms of mean and standard deviation over 10 trials. All the experiments are done using an Nvidia RTX6000 GPU server

\bfsection{Our Implementation Details}
To demonstrate the improvement of our approach, we modify the implementation of DeepLK. Specifically,
\begin{itemize}[nosep, leftmargin=*]
    \item {\em Network architecture:} We employ the same network architecture in DeepLK. 
    The network has three identical stages with the same sub-networks. In each stage, it is a siamese design sharing weight with the source and target images. There are 3 residual blocks in each siamese architecture with 64 convolutional filters. The network downsamples the output feature maps using the stride of 2.
    
    \item {\em Training:} We train the network in a stacked way (\ie Stage 1 first, then Stage 2 and finally Stage 3), not end-to-end. That is, once the previous stage is fully trained, the current stage will start training using the output feature maps from the previous stage as input. We train each stage in the network for 10 epochs with a batch size of 4, a constant learning rate $10^{-5}$, and weight decay $0.05$. Except $\mu, \lambda, \rho$ that are determined by grid search, all the rest hyperparameters keep the same as used in DeepLK. We employ Adam \cite{kingma2014adam} as our optimizer. 
    By default, we use $\mu=2, \lambda=0.9, \rho=0.1, \#sample=2$ for MSCOCO, $\mu=4, \lambda=0.5, \rho=0.2, \#sample=4$ for GoogleEarth, and $\mu=2, \lambda=0.5, \rho=0.2, \#sample=4$ for GoogleMap, where $\#sample$ denotes the number of samples for $\omega_i$ (see Alg. \ref{alg:prise}).
    
    \item {\em Inference:} Our inference follows the coarse-to-fine strategy that fits the iterative updated methods in the classic LK algorithm. That is, both source and target images go through the learned network to extract three feature maps, one per stage, and then the classic LK algorithm is applied to each feature map, starting from Stage 3 up to Stage 1 in a backward manner. The homography parameters will be upscale with a factor of 2 sequentially since the resolutions of feature maps are different, until Stage 1 is done. MHN \cite{le2020deep} is used for initializing homography parameters.

\end{itemize}

\bfsection{Evaluation Metric}
We use the same evaluation metric as in recent works, Success Rate (SR) \vs Pixel Error (PE), to compare the performance of each algorithm. PE measures the average $L_2$ distance between the 4 ground-truth perturbation points and the 4 output point location predictions (without quantization) from an algorithm, correspondingly. Then the percentages of the testing image pairs whose PEs are smaller than certain thresholds, \ie SR, are computed to compare the performance of different approaches.



\subsection{Ablation Study}

\subsubsection{Effects of Strong Star-Convexity Constraints}
Fig. \ref{fig:loss} illustrates our comparisons on the training loss at Stage 3, where (a) shows the loss for the LK algorithm, (b) shows the combination loss from the extra hinge losses, and (c) shows the weighted sum of the two losses in (a) and (b) as our objective in Eq. \ref{eqn:problem}. Here we refer to Eq. \ref{eqn:con1} as Condition~1, and Eq. \ref{eqn:con2} as Condition~2. From this figure, we can see that
\begin{itemize}[nosep, leftmargin=*]
    \item The LK loss and hinge loss are balanced well at the same scale, no dominance scenarios occurring. All the losses tend to converge over the epochs.
    
    \item From Fig. \ref{fig:loss} (c), without any hinge loss, training with the original LK loss alone is very easy to overfit.
    
    \item Condition 1 alone seems much more important than Condition 2, because from Fig. \ref{fig:loss} (a) it can prevent the loss from overfitting (while Condition 2 cannot) and from Fig.~\ref{fig:loss} (b) it can lead to a much lower loss.    
\end{itemize}
Such observations partially support our analysis in Sec. \ref{ssec:analysis}, where narrower solution spaces seem not to help regularize the learning (\ie Lemma \ref{lem:equivalence}), and the contrastive loss contributes more to the learning (\ie Eq. \ref{eqn:w_dis}) as it leads to reduce the total loss, even though the LK loss increases.

We illustrate our performance comparison in Fig. \ref{fig:perf-condition}. Overall, the performance is consistent with the LK training losses that do not overfit. As we see, the method with Condition 2 only shows similar performance to the one without using any condition, and they are hard to converge in all PE evaluations. The method with Condition 1 only show significant improvements over the one with Condition 2 in all the evaluations, and outperforms DeepLK in smaller PE evaluations but becomes worse in larger PE evaluations. Using both conditions our PRISE further boosts the performance, and achieves the best among all the competitors, thanks to the learning within the overlap of solution spaces defined by both strongly star-convex constraints.

\subsubsection{Effects of Hyperparameters}


Fig. \ref{fig:PE-param} illustrates the comparisons among different hyperparameter settings using PRISE. We only evaluate one hyperparameter once while fixing the others as the default values. Overall, our approach is very robust to different hyperparameters. Specifically, Fig.~\ref{fig:PE-param} (a) demonstrates the necessity of strong star-convexity, which verifies our insight on learning a unique minimum within a local region rather than several minima (\ie not strong). When $\mu$ is sufficiently large, the performance gaps are small. Fig.~\ref{fig:PE-param} (b) shows that the midpoint choice (\ie $\lambda=0.5$) indeed provides good performance, which intuitively follows the classic results of midpoint convex and continuous functions being convex \cite{phu1997six}. Fig.~\ref{fig:PE-param} (c) shows that when $\rho$ is small, the hinge loss cannot stop the overfitting, and when it increases to a sufficiently large number, the performance will be improved significantly and stably. All the values for $\rho$ make good balance between the LK loss and the hinge loss.

\begin{table*}[h]
\centering\small
\caption{Performance comparison on MSCOCO, GoogleEarth, and GoogleMap.}
\setlength{\tabcolsep}{1pt}{
\begin{tabular}{cc|ccccccc}
\toprule
Dataset&Method& PE<0.1& PE<0.5& PE<1& PE<3& PE<5& PE<10 & PE<20\\
\hline
\multirow{6}{*}{\rotatebox[origin=c]{90}{MSCOCO}} & SIFT+RANSAC&0.00&4.70&68.32& 84.21& 90.32 &95.26&96.55 \\
& SIFT+MAGSAC \cite{barath2019magsac} &0.00& 3.66&76.27& 93.26& 94.22& 95.32& 97.26\\
& LF-Net \cite{ono2018lf} &5.60&8.62&14.20& 23.00& 78.88 &90.18&95.45 \\
& LocalTrans \cite{shao2021localtrans} &38.24&\textbf{87.25}&96.45& 98.00& 98.72 &99.25&\textbf{100.00}\\
&DHM\cite{detone2016deep}&0.00&0.00&0.87&3.48&15.27&98.22&99.96\\
&MHN\cite{le2020deep}&0.00&4.58&81.99&95.67&96.02&98.45&98.70\\
&CLKN\cite{chang2017clkn}&35.24& 83.25&83.27& 94.26& 95.75& 97.52& 98.46\\
&DeepLK\cite{zhao2021deep}&17.16& 72.25&92.81& 96.76& 97.67& 98.92&99.03\\
&\textbf{PRISE}& \textbf{52.77} $\pm$ 12.45&
83.27 $\pm$ 5.21&
\textbf{97.29} $\pm$ 1.82&\textbf{98.44} $\pm$ 1.06&\textbf{98.76} $\pm$ 0.08& \textbf{99.31} $\pm$ 0.53&
99.33 $\pm$ 1.84\\
\hline
\multirow{6}{*}{\rotatebox[origin=c]{90}{GoogleEarth}}&SIFT+RANSAC&0.18&3.42&8.97&23.09 & 41.32&50.36& 59.88 \\
&SIFT+MAGSAC \cite{barath2019magsac}&0.00& 0.00&1.88& 2.70& 3.25& 10.03& 45.29\\
&DHM\cite{detone2016deep} & 0.00 & 0.02 & 1.46 & 2.65 & 5.57 & 25.54 & 90.32 \\
&MHN\cite{le2020deep} & 0.00 & 3.42 & 4.56 &5.02 & 8.99 & 59.90 & 93.77 \\
&CLKN\cite{chang2017clkn}& \textbf{0.27} &2.88&3.45&4.24 & 4.32 & 8.77& 75.00 \\
&DeepLK\cite{zhao2021deep}& 0.00 & 3.50& 12.01& 70.20& 84.45& 90.57& 95.52\\
&\textbf{PRISE}& 0.24 $\pm$ 1.83 & \textbf{25.44} $\pm$ 1.21& \textbf{53.00} $\pm$ 1.54& \textbf{82.69} $\pm$ 1.07& \textbf{87.16} $\pm$ 1.09& \textbf{90.69} $\pm$ 0.73& \textbf{96.70} $\pm$ 0.54\\
\hline
\multirow{6}{*}{\rotatebox[origin=c]{90}{GoogleMap}}&SIFT+RANSAC&0.00&0.00&0.00&0.00 & 0.00 &2.74& 3.44 \\
&SIFT+MAGSAC \cite{barath2019magsac}&0.00& 0.00&0.00& 0.00& 0.00& 0.15& 2.58\\
&DHM\cite{detone2016deep} & 0.00 & 0.00 & 0.00 & 1.20 & 3.43 & 6.99 & 78.89 \\
&MHN\cite{le2020deep} & 0.00 & 0.34 & 0.45 &0.50 & 3.50 & 35.69 & 93.77 \\
&CLKN\cite{chang2017clkn}&0.00&0.00&0.00&1.57 & 1.88 &8.67& 22.45 \\
&DeepLK\cite{zhao2021deep}& 0.00& 2.25&16.80& 61.33& 73.39& 83.20& 93.80\\
&\textbf{PRISE}& \textbf{17.47} $\pm$ 2.44
&\textbf{48.13}$ \pm$ 12.00& \textbf{56.93} $\pm$ 3.45&\textbf{76.21} $\pm$ 2.43& \textbf{80.04} $\pm$ 5.55& \textbf{86.13} $\pm$ 0.47& \textbf{94.02} $\pm$ 1.66\\
\bottomrule
\end{tabular}}
\label{table:COCO}
\end{table*}

\subsubsection{Visualization}
Fig. \ref{fig:loss-landscape} visualizes our loss landscape comparison results based on the outputs from Stage 3, where two random entries in the ground-truth homography matrix of the selected image pair are manipulated while the other entries are fixed. 
\begin{itemize}[nosep, leftmargin=*]
    \item PRISE can generate a locally star-convex shape around the ground truth, and accordingly return an estimation that is much closer to the ground truth than the other two, which follows what we expect.

    \item All the three methods cannot generate smooth shapes over relatively large areas in the parameter space.

    \item The LK loss alone can achieve lower values, but this does not help estimation, while DeepLK and PRISE can achieve similar values overall.

\end{itemize}

Fig. \ref{fig:perf-condition} illustrates the differences in feature maps generated by DeepLK and our PRISE, where PRISE learns the feature maps more accurately, thus leading to better performance.

\subsection{State-of-the-art Comparison}
We list our comparison results in Table \ref{table:COCO}. Clearly, our PRISE significantly improves DeepLK in all the cases. Overall, PRISE achieves the best among all the competitors with dramatically performance gaps, especially when PE is small. Recall that GoogleMap is specifically designed for multimodel image alignment, and the superior performance of PRISE better demonstrates its usage in the application. We also report the standard deviation (std) for PRISE to show that our improvements are statistically significant, and very often the std is marginal to the mean.

\section{Conclusion}
Motivated by a recent work DeepLK, in this paper we propose a novel approach for multimodel image alignment, namely, Deep Star-Convexified Lucas Kanade (PRISE), to find near-optimal solutions. Our idea is to reparametrize the loss landscapes of the LK method to be star-convex using deep learning. To this end, we introduce extra hinge losses based on the definition of strong star-convexity, and impose them on the original LK loss to enforce learning star-convex loss landscapes (approximately). This leads to a minimax problem that is solvable using adversarial training. Further, to leverage the computational cost, we propose an efficient sampling based algorithm to train PRISE. We also provide some analysis on the homography estimation results from PRISE. We finally demonstrate our approach on three benchmark datasets for image alignment and show the state-of-the-art results, especially when the PE is small.

We are aware of several very recent works that report the performance on these three benchmarks, \eg Iterative Homography Network (IHN) \cite{Cao_2022_CVPR}. Unfortunately so far we cannot reproduce the results in the paper using the public code. We will try to add such new results when they are ready. Note that, however, our approach provides a general learning framework that can fit to not only the DeepLK network but also other existing networks such as IHN. In the future we will adapt our learning framework to train other networks with strongly star-convex constraints.

\section*{Acknowledgement}
Y. Zhang, X. Huang, and Z. Zhang were all supported partially by NSF CCF-2006738.

{\small
\bibliographystyle{ieee_fullname}
\bibliography{egbib}

\begin{thebibliography}{10}\itemsep=-1pt

\bibitem{afham2022crosspoint}
Mohamed Afham, Isuru Dissanayake, Dinithi Dissanayake, Amaya Dharmasiri,
  Kanchana Thilakarathna, and Ranga Rodrigo.
\newblock Crosspoint: Self-supervised cross-modal contrastive learning for 3d
  point cloud understanding.
\newblock In {\em Proceedings of the IEEE/CVF Conference on Computer Vision and
  Pattern Recognition}, pages 9902--9912, 2022.

\bibitem{agarwal2005survey}
Anubhav Agarwal, CV Jawahar, and PJ Narayanan.
\newblock A survey of planar homography estimation techniques.
\newblock {\em Centre for Visual Information Technology, Tech. Rep.
  IIIT/TR/2005/12}, 2005.

\bibitem{andriushchenko2020understanding}
Maksym Andriushchenko and Nicolas Flammarion.
\newblock Understanding and improving fast adversarial training.
\newblock {\em arXiv preprint arXiv:2007.02617}, 2020.

\bibitem{arora2019theoretical}
Sanjeev Arora, Hrishikesh Khandeparkar, Mikhail Khodak, Orestis Plevrakis, and
  Nikunj Saunshi.
\newblock A theoretical analysis of contrastive unsupervised representation
  learning.
\newblock {\em arXiv preprint arXiv:1902.09229}, 2019.

\bibitem{bachman2019learning}
Philip Bachman, R~Devon Hjelm, and William Buchwalter.
\newblock Learning representations by maximizing mutual information across
  views.
\newblock {\em Advances in neural information processing systems}, 32, 2019.

\bibitem{bai2021recent}
Tao Bai, Jinqi Luo, Jun Zhao, Bihan Wen, and Qian Wang.
\newblock Recent advances in adversarial training for adversarial robustness.
\newblock {\em arXiv preprint arXiv:2102.01356}, 2021.

\bibitem{barath2019magsac}
Daniel Barath, Jiri Matas, and Jana Noskova.
\newblock {MAGSAC}: marginalizing sample consensus.
\newblock In {\em Conference on Computer Vision and Pattern Recognition}, 2019.

\bibitem{bhat2021deep}
Goutam Bhat, Martin Danelljan, Fisher Yu, Luc Van~Gool, and Radu Timofte.
\newblock Deep reparametrization of multi-frame super-resolution and denoising.
\newblock In {\em Proceedings of the IEEE/CVF International Conference on
  Computer Vision}, pages 2460--2470, 2021.

\bibitem{Cao_2022_CVPR}
Si-Yuan Cao, Jianxin Hu, Zehua Sheng, and Hui-Liang Shen.
\newblock Iterative deep homography estimation.
\newblock In {\em Proceedings of the IEEE/CVF Conference on Computer Vision and
  Pattern Recognition (CVPR)}, pages 1879--1888, June 2022.

\bibitem{caron2020unsupervised}
Mathilde Caron, Ishan Misra, Julien Mairal, Priya Goyal, Piotr Bojanowski, and
  Armand Joulin.
\newblock Unsupervised learning of visual features by contrasting cluster
  assignments.
\newblock {\em Advances in Neural Information Processing Systems},
  33:9912--9924, 2020.

\bibitem{celledoni2022deep}
Elena Celledoni, Helge Gl{\"o}ckner, J{\o}rgen Riseth, and Alexander Schmeding.
\newblock Deep learning of diffeomorphisms for optimal reparametrizations of
  shapes.
\newblock {\em arXiv preprint arXiv:2207.11141}, 2022.

\bibitem{chang2017clkn}
Che-Han Chang, Chun-Nan Chou, and Edward~Y Chang.
\newblock Clkn: Cascaded lucas-kanade networks for image alignment.
\newblock In {\em Proceedings of the IEEE Conference on Computer Vision and
  Pattern Recognition}, pages 2213--2221, 2017.

\bibitem{chen2020simple}
Ting Chen, Simon Kornblith, Mohammad Norouzi, and Geoffrey Hinton.
\newblock A simple framework for contrastive learning of visual
  representations.
\newblock In {\em International conference on machine learning}, pages
  1597--1607. PMLR, 2020.

\bibitem{chen2020big}
Ting Chen, Simon Kornblith, Kevin Swersky, Mohammad Norouzi, and Geoffrey~E
  Hinton.
\newblock Big self-supervised models are strong semi-supervised learners.
\newblock {\em Advances in neural information processing systems},
  33:22243--22255, 2020.

\bibitem{chen2021intriguing}
Ting Chen, Calvin Luo, and Lala Li.
\newblock Intriguing properties of contrastive losses.
\newblock {\em Advances in Neural Information Processing Systems}, 34, 2021.

\bibitem{chen2020improved}
Xinlei Chen, Haoqi Fan, Ross Girshick, and Kaiming He.
\newblock Improved baselines with momentum contrastive learning.
\newblock {\em arXiv preprint arXiv:2003.04297}, 2020.

\bibitem{chopra2005learning}
Sumit Chopra, Raia Hadsell, and Yann LeCun.
\newblock Learning a similarity metric discriminatively, with application to
  face verification.
\newblock In {\em 2005 IEEE Computer Society Conference on Computer Vision and
  Pattern Recognition (CVPR'05)}, volume~1, pages 539--546. IEEE, 2005.

\bibitem{creswell2018generative}
Antonia Creswell, Tom White, Vincent Dumoulin, Kai Arulkumaran, Biswa Sengupta,
  and Anil~A Bharath.
\newblock Generative adversarial networks: An overview.
\newblock {\em IEEE signal processing magazine}, 35(1):53--65, 2018.

\bibitem{detone2016deep}
Daniel DeTone, Tomasz Malisiewicz, and Andrew Rabinovich.
\newblock Deep image homography estimation.
\newblock {\em arXiv preprint arXiv:1606.03798}, 2016.

\bibitem{dong2018boosting}
Yinpeng Dong, Fangzhou Liao, Tianyu Pang, Hang Su, Jun Zhu, Xiaolin Hu, and
  Jianguo Li.
\newblock Boosting adversarial attacks with momentum.
\newblock In {\em Proceedings of the IEEE conference on computer vision and
  pattern recognition}, pages 9185--9193, 2018.

\bibitem{du2021self}
Bi'an Du, Xiang Gao, Wei Hu, and Xin Li.
\newblock Self-contrastive learning with hard negative sampling for
  self-supervised point cloud learning.
\newblock In {\em Proceedings of the 29th ACM International Conference on
  Multimedia}, pages 3133--3142, 2021.

\bibitem{eckart2021self}
Benjamin Eckart, Wentao Yuan, Chao Liu, and Jan Kautz.
\newblock Self-supervised learning on 3d point clouds by learning discrete
  generative models.
\newblock In {\em Proceedings of the IEEE/CVF Conference on Computer Vision and
  Pattern Recognition}, pages 8248--8257, 2021.

\bibitem{erlik2017homography}
Farzan Erlik~Nowruzi, Robert Laganiere, and Nathalie Japkowicz.
\newblock Homography estimation from image pairs with hierarchical
  convolutional networks.
\newblock In {\em Proceedings of the IEEE International Conference on Computer
  Vision Workshops}, pages 913--920, 2017.

\bibitem{fu2019adaptive}
Jun Fu, Jing Liu, Yuhang Wang, Yong Li, Yongjun Bao, Jinhui Tang, and Hanqing
  Lu.
\newblock Adaptive context network for scene parsing.
\newblock In {\em Proceedings of the IEEE/CVF International Conference on
  Computer Vision}, pages 6748--6757, 2019.

\bibitem{goodfellow2020generative}
Ian Goodfellow, Jean Pouget-Abadie, Mehdi Mirza, Bing Xu, David Warde-Farley,
  Sherjil Ozair, Aaron Courville, and Yoshua Bengio.
\newblock Generative adversarial networks.
\newblock {\em Communications of the ACM}, 63(11):139--144, 2020.

\bibitem{goodfellow2014explaining}
Ian~J Goodfellow, Jonathon Shlens, and Christian Szegedy.
\newblock Explaining and harnessing adversarial examples.
\newblock {\em arXiv preprint arXiv:1412.6572}, 2014.

\bibitem{gower2021sgd}
Robert Gower, Othmane Sebbouh, and Nicolas Loizou.
\newblock Sgd for structured nonconvex functions: Learning rates, minibatching
  and interpolation.
\newblock In {\em International Conference on Artificial Intelligence and
  Statistics}, pages 1315--1323. PMLR, 2021.

\bibitem{hadsell2006dimensionality}
Raia Hadsell, Sumit Chopra, and Yann LeCun.
\newblock Dimensionality reduction by learning an invariant mapping.
\newblock In {\em 2006 IEEE Computer Society Conference on Computer Vision and
  Pattern Recognition (CVPR'06)}, volume~2, pages 1735--1742. IEEE, 2006.

\bibitem{he2020momentum}
Kaiming He, Haoqi Fan, Yuxin Wu, Saining Xie, and Ross Girshick.
\newblock Momentum contrast for unsupervised visual representation learning.
\newblock In {\em Proceedings of the IEEE/CVF conference on computer vision and
  pattern recognition}, pages 9729--9738, 2020.

\bibitem{pmlr-v125-hinder20a}
Oliver Hinder, Aaron Sidford, and Nimit Sohoni.
\newblock Near-optimal methods for minimizing star-convex functions and beyond.
\newblock In {\em Proceedings of Thirty Third Conference on Learning Theory},
  pages 1894--1938, 2020.

\bibitem{hjelm2018learning}
R~Devon Hjelm, Alex Fedorov, Samuel Lavoie-Marchildon, Karan Grewal, Phil
  Bachman, Adam Trischler, and Yoshua Bengio.
\newblock Learning deep representations by mutual information estimation and
  maximization.
\newblock {\em arXiv preprint arXiv:1808.06670}, 2018.

\bibitem{jaiswal2020survey}
Ashish Jaiswal, Ashwin~Ramesh Babu, Mohammad~Zaki Zadeh, Debapriya Banerjee,
  and Fillia Makedon.
\newblock A survey on contrastive self-supervised learning.
\newblock {\em Technologies}, 9(1):2, 2020.

\bibitem{jiang2021guided}
Li Jiang, Shaoshuai Shi, Zhuotao Tian, Xin Lai, Shu Liu, Chi-Wing Fu, and Jiaya
  Jia.
\newblock Guided point contrastive learning for semi-supervised point cloud
  semantic segmentation.
\newblock In {\em Proceedings of the IEEE/CVF International Conference on
  Computer Vision}, pages 6423--6432, 2021.

\bibitem{kingma2014adam}
Diederik~P Kingma and Jimmy Ba.
\newblock Adam: A method for stochastic optimization.
\newblock {\em arXiv preprint arXiv:1412.6980}, 2014.

\bibitem{kuruzov2021sequential}
Ilya~A Kuruzov and Fedor~S Stonyakin.
\newblock Sequential subspace optimization for quasar-convex optimization
  problems with inexact gradient.
\newblock In {\em International Conference on Optimization and Applications},
  pages 19--33. Springer, 2021.

\bibitem{le2020deep}
Hoang Le, Feng Liu, Shu Zhang, and Aseem Agarwala.
\newblock Deep homography estimation for dynamic scenes.
\newblock In {\em Proceedings of the IEEE/CVF Conference on Computer Vision and
  Pattern Recognition}, pages 7652--7661, 2020.

\bibitem{le2020contrastive}
Phuc~H Le-Khac, Graham Healy, and Alan~F Smeaton.
\newblock Contrastive representation learning: A framework and review.
\newblock {\em IEEE Access}, 8:193907--193934, 2020.

\bibitem{lee2016optimizing}
Jasper~CH Lee and Paul Valiant.
\newblock Optimizing star-convex functions.
\newblock In {\em 2016 IEEE 57th Annual Symposium on Foundations of Computer
  Science (FOCS)}, pages 603--614. IEEE, 2016.

\bibitem{li2018visualizing}
Hao Li, Zheng Xu, Gavin Taylor, Christoph Studer, and Tom Goldstein.
\newblock Visualizing the loss landscape of neural nets.
\newblock {\em Advances in neural information processing systems}, 31, 2018.

\bibitem{li2020prototypical}
Junnan Li, Pan Zhou, Caiming Xiong, and Steven~CH Hoi.
\newblock Prototypical contrastive learning of unsupervised representations.
\newblock {\em arXiv preprint arXiv:2005.04966}, 2020.

\bibitem{li2017convergence}
Yuanzhi Li and Yang Yuan.
\newblock Convergence analysis of two-layer neural networks with relu
  activation.
\newblock {\em Advances in Neural Information Processing Systems}, 30:597--607,
  2017.

\bibitem{lin2019nesterov}
Jiadong Lin, Chuanbiao Song, Kun He, Liwei Wang, and John~E Hopcroft.
\newblock Nesterov accelerated gradient and scale invariance for adversarial
  attacks.
\newblock {\em arXiv preprint arXiv:1908.06281}, 2019.

\bibitem{lin2014microsoft}
Tsung-Yi Lin, Michael Maire, Serge Belongie, James Hays, Pietro Perona, Deva
  Ramanan, Piotr Doll{\'a}r, and C~Lawrence Zitnick.
\newblock Microsoft coco: Common objects in context.
\newblock In {\em European conference on computer vision}, pages 740--755.
  Springer, 2014.

\bibitem{loh2013regularized}
Po-Ling Loh and Martin~J Wainwright.
\newblock Regularized m-estimators with nonconvexity: Statistical and
  algorithmic theory for local optima.
\newblock {\em arXiv preprint arXiv:1305.2436}, 2013.

\bibitem{loh2015regularized}
Po-Ling Loh and Martin~J Wainwright.
\newblock Regularized m-estimators with nonconvexity: Statistical and
  algorithmic theory for local optima.
\newblock {\em The Journal of Machine Learning Research}, 16(1):559--616, 2015.

\bibitem{lucasiterative}
Bruce~D Lucas and Takeo Kanade.
\newblock An iterative image registration technique with an application to
  stereo vision.
\newblock In {\em Proceedings of Imaging Understanding Workshop}, pages
  121--130, 1981.

\bibitem{mao2016successive}
Yuanqi Mao, Michael Szmuk, and Beh{\c{c}}et A{\c{c}}{\i}kme{\c{s}}e.
\newblock Successive convexification of non-convex optimal control problems and
  its convergence properties.
\newblock In {\em 2016 IEEE 55th Conference on Decision and Control (CDC)},
  pages 3636--3641. IEEE, 2016.

\bibitem{misra2020self}
Ishan Misra and Laurens van~der Maaten.
\newblock Self-supervised learning of pretext-invariant representations.
\newblock In {\em Proceedings of the IEEE/CVF Conference on Computer Vision and
  Pattern Recognition}, pages 6707--6717, 2020.

\bibitem{nesterov2006cubic}
Yurii Nesterov and Boris~T Polyak.
\newblock Cubic regularization of newton method and its global performance.
\newblock {\em Mathematical Programming}, 108(1):177--205, 2006.

\bibitem{nguyen2018unsupervised}
Ty Nguyen, Steven~W Chen, Shreyas~S Shivakumar, Camillo~Jose Taylor, and Vijay
  Kumar.
\newblock Unsupervised deep homography: A fast and robust homography estimation
  model.
\newblock {\em IEEE Robotics and Automation Letters}, 3(3):2346--2353, 2018.

\bibitem{ono2018lf}
Yuki Ono, Eduard Trulls, Pascal Fua, and Kwang~Moo Yi.
\newblock Lf-net: Learning local features from images.
\newblock {\em Advances in neural information processing systems}, 31, 2018.

\bibitem{oord2018representation}
Aaron van~den Oord, Yazhe Li, and Oriol Vinyals.
\newblock Representation learning with contrastive predictive coding.
\newblock {\em arXiv preprint arXiv:1807.03748}, 2018.

\bibitem{phu1997six}
HX Phu.
\newblock Six kinds of roughly convex functions.
\newblock {\em Journal of optimization theory and applications},
  92(2):357--375, 1997.

\bibitem{qian2022survey}
Zhuang Qian, Kaizhu Huang, Qiu-Feng Wang, and Xu-Yao Zhang.
\newblock A survey of robust adversarial training in pattern recognition:
  Fundamental, theory, and methodologies.
\newblock {\em arXiv preprint arXiv:2203.14046}, 2022.

\bibitem{reddi2018adaptive}
S Reddi, Manzil Zaheer, Devendra Sachan, Satyen Kale, and Sanjiv Kumar.
\newblock Adaptive methods for nonconvex optimization.
\newblock In {\em Proceeding of 32nd Conference on Neural Information
  Processing Systems (NIPS 2018)}, 2018.

\bibitem{shafahi2019adversarial}
Ali Shafahi, Mahyar Najibi, Amin Ghiasi, Zheng Xu, John Dickerson, Christoph
  Studer, Larry~S Davis, Gavin Taylor, and Tom Goldstein.
\newblock Adversarial training for free!
\newblock {\em arXiv preprint arXiv:1904.12843}, 2019.

\bibitem{shao2022scrnet}
Huixiang Shao, Zhijiang Zhang, Xiaoyu Feng, and Dan Zeng.
\newblock Scrnet: A spatial consistency guided network using contrastive
  learning for point cloud registration.
\newblock {\em Symmetry}, 14(1):140, 2022.

\bibitem{shao2021localtrans}
Ruizhi Shao, Gaochang Wu, Yuemei Zhou, Ying Fu, Lu Fang, and Yebin Liu.
\newblock Localtrans: A multiscale local transformer network for
  cross-resolution homography estimation.
\newblock In {\em IEEE Conference on Computer Vision (ICCV 2021)}, 2021.

\bibitem{sohn2016improved}
Kihyuk Sohn.
\newblock Improved deep metric learning with multi-class n-pair loss objective.
\newblock {\em Advances in neural information processing systems}, 29, 2016.

\bibitem{tang2022contrastive}
Liyao Tang, Yibing Zhan, Zhe Chen, Baosheng Yu, and Dacheng Tao.
\newblock Contrastive boundary learning for point cloud segmentation.
\newblock In {\em Proceedings of the IEEE/CVF Conference on Computer Vision and
  Pattern Recognition}, pages 8489--8499, 2022.

\bibitem{tian2020contrastive}
Yonglong Tian, Dilip Krishnan, and Phillip Isola.
\newblock Contrastive multiview coding.
\newblock In {\em European conference on computer vision}, pages 776--794.
  Springer, 2020.

\bibitem{tian2020makes}
Yonglong Tian, Chen Sun, Ben Poole, Dilip Krishnan, Cordelia Schmid, and
  Phillip Isola.
\newblock What makes for good views for contrastive learning?
\newblock {\em Advances in Neural Information Processing Systems},
  33:6827--6839, 2020.

\bibitem{tuan2000low}
Hoang~Duong Tuan and Pierre Apkarian.
\newblock Low nonconvexity-rank bilinear matrix inequalities: algorithms and
  applications in robust controller and structure designs.
\newblock {\em IEEE Transactions on Automatic Control}, 45(11):2111--2117,
  2000.

\bibitem{vettam2019regularized}
Sujit Vettam and Majnu John.
\newblock Regularized deep learning with nonconvex penalties.
\newblock {\em arXiv preprint arXiv:1909.05142}, 2019.

\bibitem{wang2022improving}
Di Wang, Lulu Tang, Xu Wang, Luqing Luo, and Zhi-Xin Yang.
\newblock Improving deep learning on point cloud by maximizing mutual
  information across layers.
\newblock {\em Pattern Recognition}, 131:108892, 2022.

\bibitem{wang2021understanding}
Feng Wang and Huaping Liu.
\newblock Understanding the behaviour of contrastive loss.
\newblock In {\em Proceedings of the IEEE/CVF conference on computer vision and
  pattern recognition}, pages 2495--2504, 2021.

\bibitem{wang2020adaptively}
Huachuan Wang and James Ting-Ho Lo.
\newblock Adaptively solving the local-minimum problem for deep neural
  networks.
\newblock {\em arXiv preprint arXiv:2012.13632}, 2020.

\bibitem{wang2020understanding}
Tongzhou Wang and Phillip Isola.
\newblock Understanding contrastive representation learning through alignment
  and uniformity on the hypersphere.
\newblock In {\em International Conference on Machine Learning}, pages
  9929--9939. PMLR, 2020.

\bibitem{wong2020fast}
Eric Wong, Leslie Rice, and J~Zico Kolter.
\newblock Fast is better than free: Revisiting adversarial training.
\newblock {\em arXiv preprint arXiv:2001.03994}, 2020.

\bibitem{wu2018unsupervised}
Zhirong Wu, Yuanjun Xiong, Stella~X Yu, and Dahua Lin.
\newblock Unsupervised feature learning via non-parametric instance
  discrimination.
\newblock In {\em Proceedings of the IEEE conference on computer vision and
  pattern recognition}, pages 3733--3742, 2018.

\bibitem{yan2022implicit}
Siming Yan, Zhenpei Yang, Haoxiang Li, Li Guan, Hao Kang, Gang Hua, and Qixing
  Huang.
\newblock Implicit autoencoder for point cloud self-supervised representation
  learning.
\newblock {\em arXiv preprint arXiv:2201.00785}, 2022.

\bibitem{yang2021unsupervised}
Cheng-Kun Yang, Yung-Yu Chuang, and Yen-Yu Lin.
\newblock Unsupervised point cloud object co-segmentation by co-contrastive
  learning and mutual attention sampling.
\newblock In {\em Proceedings of the IEEE/CVF International Conference on
  Computer Vision}, pages 7335--7344, 2021.

\bibitem{yang2020graduated}
Heng Yang, Pasquale Antonante, Vasileios Tzoumas, and Luca Carlone.
\newblock Graduated non-convexity for robust spatial perception: From
  non-minimal solvers to global outlier rejection.
\newblock {\em IEEE Robotics and Automation Letters}, 5(2):1127--1134, 2020.

\bibitem{ye2019fast}
Yuanxin Ye, Lorenzo Bruzzone, Jie Shan, Francesca Bovolo, and Qing Zhu.
\newblock Fast and robust matching for multimodal remote sensing image
  registration.
\newblock {\em IEEE Transactions on Geoscience and Remote Sensing},
  57(11):9059--9070, 2019.

\bibitem{ye2017robust}
Yuanxin Ye, Jie Shan, Lorenzo Bruzzone, and Li Shen.
\newblock Robust registration of multimodal remote sensing images based on
  structural similarity.
\newblock {\em IEEE Transactions on Geoscience and Remote Sensing},
  55(5):2941--2958, 2017.

\bibitem{zhang2020content}
Jirong Zhang, Chuan Wang, Shuaicheng Liu, Lanpeng Jia, Nianjin Ye, Jue Wang, Ji
  Zhou, and Jian Sun.
\newblock Content-aware unsupervised deep homography estimation.
\newblock In {\em European Conference on Computer Vision}, pages 653--669.
  Springer, 2020.

\bibitem{zhao2021deep}
Yiming Zhao, Xinming Huang, and Ziming Zhang.
\newblock Deep lucas-kanade homography for multimodal image alignment.
\newblock In {\em Proceedings of the IEEE/CVF Conference on Computer Vision and
  Pattern Recognition}, pages 15950--15959, 2021.

\bibitem{zhou2018sgd}
Yi Zhou, Junjie Yang, Huishuai Zhang, Yingbin Liang, and Vahid Tarokh.
\newblock {SGD} converges to global minimum in deep learning via star-convex
  path.
\newblock In {\em International Conference on Learning Representations}, 2019.

\bibitem{zimmermann2021contrastive}
Roland~S Zimmermann, Yash Sharma, Steffen Schneider, Matthias Bethge, and
  Wieland Brendel.
\newblock Contrastive learning inverts the data generating process.
\newblock In {\em International Conference on Machine Learning}, pages
  12979--12990. PMLR, 2021.

\end{thebibliography}
}
\end{document}